\documentclass{article}

\usepackage[english]{babel}

\usepackage[letterpaper,top=2cm,bottom=2cm,left=3cm,right=3cm,marginparwidth=1.75cm]{geometry}

\usepackage{amsmath}
\usepackage{amssymb}
\usepackage{amsthm}
\usepackage{xcolor}
\usepackage{multirow}

\usepackage{graphicx}
\usepackage[colorlinks=true, allcolors=blue]{hyperref}

\usepackage{tikz}
\usepackage{tikz-3dplot}
\usetikzlibrary{calc,positioning,arrows.meta}

\theoremstyle{plain}
\newtheorem{theorem}{Theorem}
\newtheorem{lemma}[theorem]{Lemma}
\newtheorem{proposition}[theorem]{Proposition}

\theoremstyle{definition}
\newtheorem{definition}[theorem]{Definition}

\title{Neural collapse in the orthoplex regime}

\author{
James Alcala\thanks{Department of Mathematics, University of Southern California, Los Angeles, CA}
\and 
Rayna Andreeva\thanks{School of Informatics, University of Edinburgh, Edinburgh, Scotland}
\and 
Vladimir A.\ Kobzar\thanks{Department of Mathematics, The Ohio State University, Columbus, OH}
\and 
Dustin G.\ Mixon\footnotemark[3] \thanks{Translational Data Analytics Institute, The Ohio State University, Columbus, OH}
\and 
Sanghoon Na\thanks{Department of Mathematics, University of Maryland, College Park, College Park, MD}
\and 
Shashank Sule\footnotemark[5]
\and 
Yangxinyu Xie\thanks{Department of Statistics and Data Science, University of Pennsylvania, Philadelphia, PA}
}

\date{}

\begin{document}
\maketitle

\begin{abstract}
When training a neural network for classification, the feature vectors of the training set are known to collapse to the vertices of a regular simplex, provided the dimension $d$ of the feature space and the number $n$ of classes satisfies $n\leq d+1$.
This phenomenon is known as \textit{neural collapse}.
For other applications like language models, one instead takes $n\gg d$.
Here, the neural collapse phenomenon still occurs, but with different emergent geometric figures.
We characterize these geometric figures in the \textit{orthoplex regime} where $d+2\leq n\leq 2d$.
The techniques in our analysis primarily involve Radon's theorem and convexity.
\end{abstract}

\section{Introduction}

A deep neural network $N\colon\mathbb{R}^r\to\mathbb{R}^n$ can be decomposed as a feature map $F\colon\mathbb{R}^r\to\mathbb{R}^d$, followed by an affine linear map $L\colon\mathbb{R}^d\to\mathbb{R}^n$, and then the softmax function $\mathbb{R}^n\to\mathbb{R}^n$:
\[
N
\, = \,
\operatorname{softmax} \, \circ \, L \, \circ \, F.
\]
When training $N$ for a classification task, $n$ denotes the number of classes to distinguish between.
The ideal neural network maps $x\mapsto e_k$, where $e_k\in\mathbb{R}^n$ denotes the standard basis element (a.k.a.\ \textit{one-hot encoding}) supported on the intended classification $k\in[n]:=\{1,\ldots,n\}$ of the data point $x\in\mathbb{R}^r$.
To construct such a neural network, we collect a large training set of $x$'s and corresponding $k$'s before training the parameters that define $F$ and $L$ until $N(x)\approx e_k$ for each $(x,k)$ in our training set.

In 2020, Papyan, Han, and Donoho~\cite{PapyanHD:20} discovered a geometric phenomenon known as \textit{neural collapse} that emerges when training such a neural network until the training loss goes to zero.
In this ``terminal phase of training,'' several things occur simultaneously.
First, given any pair of points $x$ and $x'$ in the training set with the same intended classification $k$, it holds that the corresponding features collapse: $F(x)\approx F(x')$.
In particular, special points $h_1,\ldots,h_n$ emerge in the feature space $\mathbb{R}^d$ such that
\[
F(x)\approx h_k
\]
for every $(x,k)$ in the training set.
Furthermore, if each class is equally represented in the training set, then these special points $\{h_k\}_{k\in[n]}$ form the vertices of a \textit{regular simplex}.
(Of course, this implicitly assumes $d\geq n-1$ so that it's possible for $\mathbb{R}^d$ to contain an $n$-vertex simplex; more on that later.)
Next, if the $k$th entry of the affine linear map $L\colon\mathbb{R}^d\to\mathbb{R}^n$ is given by
\[
L(h)_k
=\langle w_k,h\rangle+b_k,
\]
then the weight vectors $\{w_k\}_{k\in[n]}$ in $\mathbb{R}^d$ that result from our training process form the vertices of a regular simplex centered at the origin such that
\[
w_k
\propto h_k-\frac{1}{n}\sum_{i=1}^n h_i.
\]
In so many words, the affine linear map $L$ ends up geometrically aligning with the feature vectors $F(x)$ of our training set.

In the time since its empirical discovery, neural collapse has been analyzed theoretically under an \textit{unconstrained features model}~\cite{MixonPP:22}.
This model decouples the analysis from the architecture of $F$ by treating the feature vectors of the training set as free variables.
(This modeling choice is justified since $F$ is highly overparameterized in practice.)
To date, almost all research in neural collapse focuses on the regime $n\le d+1$ so that the feature domain can support an $n$-vertex simplex.
But there are many use cases (e.g., language models) in which $n\gg d$.
So which geometric figures emerge when $n>d+1$?

When $d$ is fixed and $n\to\infty$, \cite{lu2022neural} established that the emergent figures are uniformly distributed on the sphere.
More recent work in \cite{JiangEtal:24} offers a nonasymptotic analysis of the $n>d+1$ regime.
This work takes $m$ examples from each class and denotes the corresponding unconstrained feature vectors by $H=\{h_{k,i}\}_{k\in[n],i\in[m]}$.
As before, the weight vectors for each class are $W=\{w_k\}_{k\in[n]}$.
Finally, we avoid the nuisance of translation and scale ambiguity by removing biases $b_k$ from $L$ and furthermore restricting each $h_{k,i}$ and $w_k$ to the unit sphere in $\mathbb{R}^d$.
Given a fixed temperature $\tau>0$, we seek to minimize the cross-entropy loss:
\[
\begin{aligned}
&\text{minimize}
\quad
&&\mathcal{L}_{\operatorname{CE}}^{(\tau)}(W,H):=
\frac{1}{mn}\sum_{k=1}^{n}\sum_{i=1}^{m} -\log \bigg( \frac{\exp(\langle w_k, h_{k,i}\rangle/\tau)  }{\sum_{j} \exp(\langle w_j, h_{k,i}\rangle/\tau)  } \bigg)\\[0.5em]
&
\text{subject to}
\quad
&&\|h_{k,i}\|=\|w_k\|=1 \quad \text{for all} ~ k\in[n],~i\in[m].
\end{aligned}
\]
In the limit as $\tau\to0$, Lemma~3.1 in~\cite{JiangEtal:24} gives that the minimizers of $\mathcal{L}_{\operatorname{CE}}^{(\tau)}$ converge to minimizers of the so-called \textit{hardmax} problem:
\[
\begin{aligned}
&\text{minimize}
\quad
&&\mathcal{L}_{\operatorname{HM}}(W,H):=
\max_{k\in [n]}\max_{i\in [m]}\max_{k'\neq k} \, \langle w_{k} - w_{k'}, h_{k,i} \rangle\\[0.5em]
&
\text{subject to}
\quad
&&\|h_{k,i}\|=\|w_k\|=1 \quad \text{for all} ~ k\in[n],~i\in[m].
\end{aligned}
\]
We call $(W,H)$ a \textbf{hardmax code} if it minimizes $\mathcal{L}_{\operatorname{HM}}(W,H)$.
In this hardmax setting, \cite{JiangEtal:24} observed a generalization of the neural collapse phenomenon in which the emergent geometric objects generalize the regular simplex.
To make this explicit, given a spherical configuration $X=\{x_k\}_{k\in[n]}$, denote
\[
\delta(X) 
:= \min_{j \in [n]} \operatorname{dist}(x_j, \operatorname{conv}\{x_i\}_{i \in [n] \setminus \{j\}}).
\]
We call $X$ a \textbf{softmax code} if it maximizes $\delta(X)$.
It turns out that for every hardmax code $(W,H)$, it holds that $W$ is a softmax code, and conversely, every softmax code $W$ can be extended to a hardmax code $(W,H)$; see Theorem~C.7 in~\cite{JiangEtal:24}.
Furthermore, one may frequently take $h_{k,i}=w_k$ for every $k\in[n]$ and $i\in[m]$ (e.g., whenever $d=2$ or $n\leq d+1$), and it is conjectured that this is always possible.

To date, softmax codes have been characterized in two settings: $d=2$ and $n\leq d+1$.
In these settings, softmax codes form the vertices of an origin-centered regular polygon or simplex, respectively.
We are particularly interested in the \textbf{orthoplex regime} in which $d+2\leq n\leq 2d$.
In this regime, it is known that any $n$ of the $2d$ points $\pm e_1,\ldots,\pm e_d\in\mathbb{R}^d$ form a softmax code.
Curiously, all of these are examples of \textit{spherical codes}, namely, solutions to the Tammes problem of finding $n$ points on the sphere that maximize the minimum pairwise distance.

In this paper, we characterize all softmax codes in the orthoplex regime.
See Figure~\ref{fig.overview} for a schematic that illustrates how our results relate to each other (and to the work of~\cite{JiangEtal:24}).
Much like the $d=2$ and $n\leq d+1$ regimes, we establish that softmax codes are precisely the spherical codes in the orthoplex regime; this characterization is the subject of Section~\ref{sec.softmax orthoplex}.
Next, in Section~\ref{sec.self duality}, we establish that for every softmax code $W$ in the orthoplex regime, taking $H=W$ delivers a hardmax code $(W,H)$.
In Section~\ref{sec.nonequal}, we observe that for positive temperatures $\tau$, certain softmax codes~$W$ have smaller values of $\mathcal{L}_{\operatorname{CE}}^{(\tau)}(W,W)$ than others, meaning not all softmax codes are created equal.
Interestingly, the best softmax codes $W$ have ``low entropy'' when $\tau$ is sufficiently small and ``high entropy'' when $\tau$ is sufficiently large.
We conclude in Section~\ref{sec.discussion} with a discussion.

\begin{figure}[t]
\centering
\begin{tikzpicture}[
  >=Latex,
  font=\small,
  node distance=10mm,
  circ/.style={
    draw,
    circle,
    align=center,
    inner sep=2.5pt,
    minimum size=22mm
  },
  lab/.style={midway, fill=white, inner sep=1.5pt, font=\scriptsize}
]

\def\R{3.5}
\coordinate (O) at (0,0);

\coordinate (v3) at ($(O)+({\R*cos(0)},{\R*sin(0)})$);
\coordinate (v4) at ($(O)+({\R*cos(288)},{\R*sin(288)})$);
\coordinate (v5) at ($(O)+({\R*cos(216)},{\R*sin(216)})$);
\coordinate (v1) at ($(O)+({\R*cos(144)},{\R*sin(144)})$);
\coordinate (v2) at ($(O)+({\R*cos(72)},{\R*sin(72)})$);

\node[circ] (c1) at (v1) {minimize\\ $\mathcal{L}_{\operatorname{CE}}^{(\tau)}$};
\node[circ] (c2) at (v2) {hardmax\\ code\\ $(W,H)$};
\node[circ] (c3) at (v3) {softmax\\ code $W$};
\node[circ] (c4) at (v4) {self-dual\\ hardmax\\ code\\ $(W,W)$};
\node[circ] (c5) at (v5) {low/high-\\entropy\\ softmax\\ codes};

\node[circ, right=18mm of c3] (c6) {spherical\\ code $W$};

\draw[->] (c1) -- (c2)
  node[lab, above, sloped, align=center, text width=14mm] {Lem.~3.1 in~\cite{JiangEtal:24}}
  node[lab, below, sloped] {$\tau\to0$};

\draw[->] (c2) -- (c3)
  node[lab, above, sloped, align=center, text width=14mm] {Thm.~C.7 in~\cite{JiangEtal:24}};

\draw[->] (c3) -- (c4)
  node[lab, below, sloped] {Thm.~\ref{thm.selfduality}};

\draw[->] (c4) -- (c5)
  node[lab, below, sloped] {Thm.~\ref{thm.low high entropy}};

\draw[->] (c1) -- (c5)
  node[lab, left] {Thm.~\ref{thm.low high entropy}~~}
  node[lab, right] {~$\tau$ small/large};

\draw[<->] (c3) -- (c6)
  node[lab, above] {Thm.~\ref{thm.softmax codes in orthoplex regime}};

\end{tikzpicture}
\caption{Schematic of our main results on neural collapse in the orthoplex regime. Theorems~\ref{thm.softmax codes in orthoplex regime}, \ref{thm.selfduality}, and \ref{thm.low high entropy} appear in Sections~\ref{sec.softmax orthoplex}, \ref{sec.self duality}, and~\ref{sec.nonequal}, respectively.\label{fig.overview}}
\end{figure}
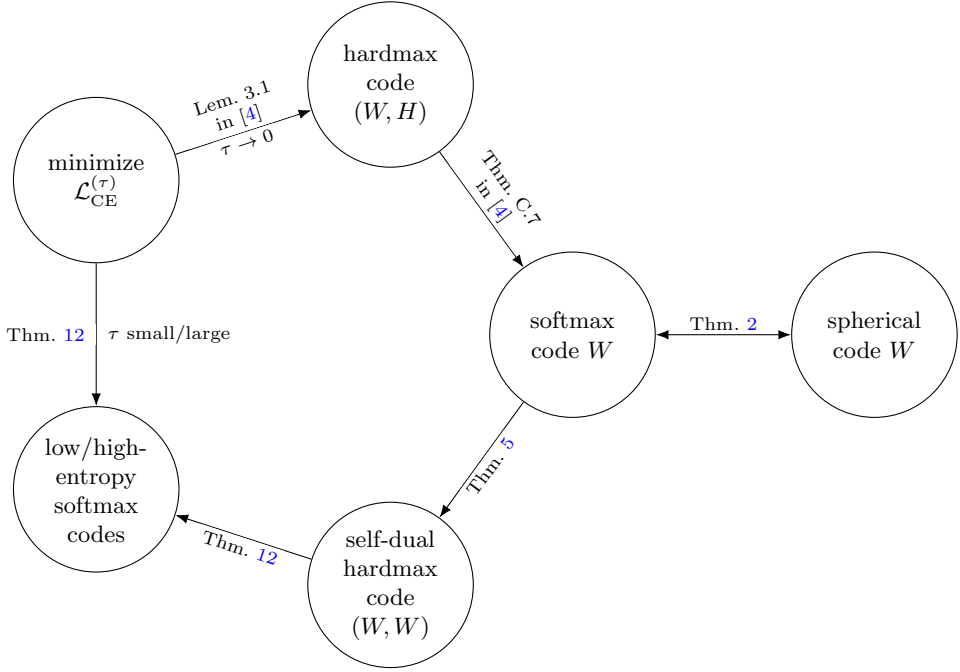

\section{Softmax codes in the orthoplex regime}
\label{sec.softmax orthoplex}

Given a spherical configuration $X = \{x_i\}_{i\in[n]}$, we denote
\[
\alpha(X)
:= \max_{\substack{i,j\in[n]\\i\neq j}}\langle x_i,x_j\rangle.
\]
We call $X$ a \textbf{spherical code} if it minimizes $\alpha(X)$.
Spherical codes have long been studied due to their application to analog coding.
Since $\alpha$ is nonconvex with many local minimizers, spherical codes are generally hard to come by, but they are fully understood in the orthoplex regime:

\begin{proposition}[Rankin's orthoplex bound; see Theorem~1 in~\cite{Rankin:55}, cf.~\cite{ORourke:mo}]
\label{prop.rankin orthoplex bound}
Fix $d\geq2$ and $n\geq d+2$.
For every configuration $X=\{x_i\}_{i\in[n]}$ in $S^{d-1}$, it holds that
\[
\alpha(X)
\geq0.
\]
Furthermore, equality is achievable if and only if $n\leq 2d$.
\end{proposition}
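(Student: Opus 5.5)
The plan is to prove the three parts in turn: the lower bound $\alpha(X)\geq0$ when $n\geq d+2$, achievability when $n\leq 2d$, and non-achievability when $n>2d$.

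\textbf{Lower bound.} Suppose for contradiction that $\langle x_i,x_j\rangle<0$ for all $i\neq j$. Since $n\geq d+2$, the vectors $x_1,\ldots,x_n$ are affinely dependent, because any $d+2$ points in $\mathbb{R}^d$ are. Radon's theorem then gives disjoint nonempty index sets $I,J$ and nonnegative weights $\lambda_i$, $\mu_j$, each summing to one, with
\[
v:=\sum_{i\in I}\lambda_i x_i=\sum_{j\in J}\mu_j x_j .
\]
Take the inner product of $v$ with itself using the two representations:
\[
\|v\|^2=\sum_{i\in I}\sum_{j\in J}\lambda_i\mu_j\langle x_i,x_j\rangle .
\]
Every term on the right is $\leq0$, and at least one term has $\lambda_i\mu_j>0$. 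That term is strictly negative because $I\cap J=\emptyset$. Hence $\|v\|^2<0$, which is impossible. So $\alpha(X)\geq0$.

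\textbf{Achievability for $n\leq 2d$.} Take any $n$ distinct points from $\{\pm e_1,\ldots,\pm e_d\}$. Two distinct points in this set have inner product either $0$ or $-1$. Since $n\geq d+2$, the pigeonhole principle forces some orthogonal pair, so $\alpha=0$ exactly.

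\textbf{Non-achievability for $n>2d$.} This is the step I expect to need the most care. I will prove by induction on $d$ that at most $2d$ unit vectors in $\mathbb{R}^d$ can have pairwise nonpositive inner products. The base case $d=1$ is clear, since only $\pm1$ are available.

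For the inductive step, fix one vector $x_n$ from a configuration with pairwise nonpositive inner products. The remaining vectors split into two groups:
\begin{itemize}
\item those orthogonal to $x_n$;
\item those with $\langle x_i,x_n\rangle<0$.
\end{itemize}

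First, at most one vector lies in the second group, apart from possibly $-x_n$. To see this, suppose $x_i,x_j$ both lie in the second group and $x_j=-x_n$. Then $\langle x_i,x_j\rangle=-\langle x_i,x_n\rangle>0$, a contradiction. So if $-x_n$ is present, it is the only vector in the second group.

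If instead $-x_n$ is absent, project everything onto $x_n^\perp$. For $i,j$ in the second group,
\[
\langle Px_i,Px_j\rangle=\langle x_i,x_j\rangle-\langle x_i,x_n\rangle\langle x_j,x_n\rangle<0,
\]
since the subtracted product is positive. The same computation shows that projections of second-group vectors have nonpositive inner product with the vectors orthogonal to $x_n$. Every projection is nonzero, so after normalizing we obtain a configuration in $S^{d-2}$ with pairwise nonpositive inner products. By induction it has at most $2(d-1)$ points. Adding back $x_n$ gives a total of at most $2d-1<2d$.

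If $-x_n$ is present, the other $n-2$ vectors are all orthogonal to $x_n$ and lie in $S^{d-2}$. By induction there are at most $2(d-1)$ of them, so the total is at most $2d$.

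In either case $n\leq 2d$, which completes the induction. Therefore $\alpha(X)=0$ forces $n\leq 2d$.
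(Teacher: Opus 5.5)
Your proof is correct. Since the paper does not prove this proposition at all (it cites Rankin's Theorem~1 and O'Rourke's MathOverflow discussion), your argument is a self-contained replacement for that citation rather than a variant of an in-paper proof.

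Your lower bound fits the rest of the paper well. It uses the same Radon partition that drives Lemma~\ref{lem.K avoids 0} and Lemma~\ref{softmax-norattler}. The identity $\|v\|^2=\sum_{i\in I,\,j\in J}\lambda_i\mu_j\langle x_i,x_j\rangle$ then shows directly that no $d+2$ vectors in $\mathbb{R}^d$ can be pairwise strictly obtuse; this step needs neither unit norms nor $d\geq 2$. Your $2d$ bound is the classical projection-onto-$x_n^\perp$ induction. The case split on whether $-x_n$ is present is handled correctly, including the sign computations $\langle Px_i,Px_j\rangle=\langle x_i,x_j\rangle-\langle x_i,x_n\rangle\langle x_j,x_n\rangle$ for the three types of pairs. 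Your route makes Section~\ref{sec.softmax orthoplex} self-contained modulo Radon's theorem, whereas the paper's citation buys only brevity.

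Two small repairs are needed.
\begin{itemize}
\item The sentence ``at most one vector lies in the second group, apart from possibly $-x_n$'' is false as literally stated. For the vertices of a regular simplex, every other vertex lies in the second group and none equals $-x_n$. What you actually prove and use is only that if $-x_n$ is present, then it is the sole member of the second group; rephrase it that way.
\item Justify ``every projection is nonzero'' by noting that $\|Px_i\|^2=1-\langle x_i,x_n\rangle^2$, which vanishes only when $x_i=\pm x_n$. Here $x_i=x_n$ is excluded by $\langle x_i,x_n\rangle\leq 0$, and $x_i=-x_n$ is excluded by the case assumption.
\end{itemize}
The pigeonhole remark in the achievability step is unnecessary. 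The lower bound already gives $\alpha\geq 0$, and every inner product among distinct points of $\{\pm e_1,\ldots,\pm e_d\}$ is $0$ or $-1$.
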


What follows is the main result of this section, which identifies spherical codes in the orthoplex regime with softmax codes:

\begin{theorem}
\label{thm.softmax codes in orthoplex regime}
Fix $d\geq 2$ and $n\geq d+2$.
For every configuration $X=\{x_i\}_{i\in[n]}$ in $S^{d-1}$, it holds that
\[
\delta(X)\leq 1.
\]
Furthermore, $\delta(X)=1$ if and only if $\alpha(X)=0$.
That is, in the orthoplex regime, the softmax codes are precisely the spherical codes.
\end{theorem}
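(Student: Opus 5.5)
We will use Radon's theorem together with an elementary distance computation. Since $n\ge d+2$, Radon's theorem partitions $[n]$ into disjoint nonempty sets $I,J$ with $\operatorname{conv}\{x_i\}_{i\in I}\cap\operatorname{conv}\{x_j\}_{j\in J}\ne\emptyset$. Choose a point $p$ in this intersection and write $p=\sum_{i\in I}\lambda_i x_i=\sum_{j\in J}\mu_j x_j$ with convex weights. Then
\[
\|p\|^2=\sum_{i\in I}\sum_{j\in J}\lambda_i\mu_j\langle x_i,x_j\rangle\le \alpha(X),
\]
so $\|p\|^2 \le \alpha(X)$. This recovers Rankin's bound $\alpha(X)\ge0$ of Proposition~\ref{prop.rankin orthoplex bound}.

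\textbf{The bound $\delta(X)\le 1$.} Pick some $k\in I$ with $\lambda_k>0$. Since $J\subseteq[n]\setminus\{k\}$, we have $p\in\operatorname{conv}\{x_i\}_{i\ne k}$, hence $\delta(X)^2\le\|x_k-p\|^2=1-2\langle x_k,p\rangle+\|p\|^2$. We need $\langle x_k,p\rangle\ge \tfrac12\|p\|^2$ for a suitable $k$. Averaging over $k\in I$ with weights $\lambda_k$ gives $\sum_k\lambda_k\langle x_k,p\rangle=\|p\|^2\ge0$, so some $k\in I$ with $\lambda_k>0$ satisfies $\langle x_k,p\rangle\ge\|p\|^2$. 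For this $k$, $\|x_k-p\|^2\le 1-\|p\|^2\le 1$, and therefore $\delta(X)\le1$.

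\textbf{Necessity of $\alpha(X)=0$.} Suppose $\delta(X)=1$. The averaging argument above forces $p=0$ for every Radon point, so $0\in\operatorname{conv}\{x_i\}_{i\ne k}$ for some $k$. This is not yet enough; we must show every pairwise inner product is $\le0$. Suppose instead $\langle x_a,x_b\rangle>0$ for some $a\ne b$. Then we must find an index $j$ with $\operatorname{dist}(x_j,\operatorname{conv}\{x_i\}_{i\ne j})<1$. Try $j=a$. Since $0\in\operatorname{conv}$ of the other points would give distance exactly $1$, the plan is to perturb toward $x_b$: for small $t>0$, set $q=(1-t)c+t x_b$, where $c$ is a point of $\operatorname{conv}\{x_i\}_{i\ne a}$ with $\|c\|$ small, ideally $c=0$. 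Then $\|x_a-q\|^2=1-2t\langle x_a,x_b\rangle+O(t^2)<1$. The crux, which we expect to be the main obstacle, is to show $0\in\operatorname{conv}\{x_i\}_{i\ne a}$ for every $a$ (or at least for the relevant $a$). For this we apply Radon's theorem to the $n-1\ge d+1$ points $X\setminus\{x_a\}$ if $n-1\ge d+2$. Otherwise $n=d+2$, and we use the Radon partition of all $n$ points, which gives $p=0$ with $0=\sum_{i\in I}\lambda_i x_i=\sum_{j\in J}\mu_j x_j$. Then $x_a$ lies in one side, say $I$, so $0\in\operatorname{conv}\{x_j\}_{j\in J}$ avoids $x_a$. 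This argument works whenever $p=0$ is forced, and we showed $\delta=1$ forces $\|p\|=0$: indeed $\|x_k-p\|^2\le 1-\|p\|^2$, so $\delta=1$ gives $p=0$.

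\textbf{Sufficiency of $\alpha(X)=0$.} Conversely, suppose $\alpha(X)=0$. For any $j$ and any convex combination $q=\sum_{i\ne j}c_ix_i$, we have $\langle x_j,q\rangle\le0$, so $\|x_j-q\|^2=1-2\langle x_j,q\rangle+\|q\|^2\ge1$. Hence $\delta(X)\ge1$, and combined with the bound above, $\delta(X)=1$. Since Proposition~\ref{prop.rankin orthoplex bound} shows spherical codes are exactly those configurations with $\alpha=0$ when $n\le 2d$, softmax codes and spherical codes coincide in this regime.
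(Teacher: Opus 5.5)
Your proposal is correct and follows the paper's proof essentially step for step. A nonzero Radon point forces $\delta(X)<1$ (Lemma~\ref{lem.K avoids 0}); a zero Radon point puts the segment from $0$ to $x_b$ inside $\operatorname{conv}\{x_i\}_{i\neq a}$, so $\langle x_a,x_b\rangle>0$ forces $\delta(X)<1$ (Lemma~\ref{lem: acute angle implies short distance}); and the halfspace argument gives sufficiency. The differences are minor: you derive $\delta(X)\le 1$ and Rankin's bound $\alpha(X)\ge 0$ directly from the Radon point via $\|p\|^2\le\alpha(X)$ rather than citing them, and your case split on $n-1\ge d+2$ is unnecessary, since the Radon partition of all $n$ points handles every $n\ge d+2$.
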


Before we can prove Theorem~\ref{thm.softmax codes in orthoplex regime}, we first prove a couple of lemmas.

\begin{lemma}
\label{lem.K avoids 0}
Fix $d\geq 2$ and $n\geq d+2$.
Given a configuration $X=\{x_i\}_{i\in[n]}$ in $S^{d-1}$, suppose there exists a partition $[n]=A\sqcup B$ with both $A$ and $B$ nonempty such that the intersection
\[
K
:=\operatorname{conv}\{x_i\}_{i\in A} \cap \operatorname{conv}\{x_i\}_{i\in B}
\]
is nonempty with $0\not\in K$.
Then $\delta(X)<1$.
\end{lemma}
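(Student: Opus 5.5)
The plan is a direct argument: exhibit a specific index $i$ and a specific point of $\operatorname{conv}\{x_k\}_{k\neq i}$ lying at distance strictly less than $1$ from $x_i$. Since $\delta(X)$ is a minimum over $j$ of $\operatorname{dist}(x_j,\operatorname{conv}\{x_k\}_{k\neq j})$, this forces $\delta(X)<1$. The hypotheses $d\geq 2$ and $n\geq d+2$ will not be used. Only the existence of the partition with $K\neq\emptyset$ and $0\notin K$ matters.

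First, pick any point $p\in K$. Because $0\notin K$, we have $p\neq 0$. Since $p\in\operatorname{conv}\{x_k\}_{k\in A}$, write
\[
p=\sum_{k\in A}a_k x_k,\qquad a_k\geq 0,\qquad \sum_{k\in A}a_k=1 .
\]
Taking the inner product with $p$ gives
\[
\sum_{k\in A}a_k\langle x_k,p\rangle=\|p\|^2 .
\]
The left side is a convex combination of the numbers $\langle x_k,p\rangle$, so at least one index $i\in A$ satisfies $\langle x_i,p\rangle\geq\|p\|^2$.

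Second, use $p$ itself as the comparison point for this $x_i$. Since $i\in A$ and $A\cap B=\emptyset$, we have $B\subseteq[n]\setminus\{i\}$. Hence
\[
p\in\operatorname{conv}\{x_k\}_{k\in B}\subseteq\operatorname{conv}\{x_k\}_{k\neq i}.
\]
Expanding with $\|x_i\|=1$,
\[
\|x_i-p\|^2=1+\|p\|^2-2\langle x_i,p\rangle\leq 1-\|p\|^2<1,
\]
where the strict inequality uses $p\neq0$. Therefore
\[
\delta(X)\leq\operatorname{dist}\bigl(x_i,\operatorname{conv}\{x_k\}_{k\neq i}\bigr)\leq\|x_i-p\|<1 .
\]

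I do not expect a real obstacle here. The only step needing care is the averaging step that selects $i$ with $\langle x_i,p\rangle\geq\|p\|^2$. The useful observation is that $\|x_i-q\|<1$ is equivalent to $\langle x_i,q\rangle>\|q\|^2/2$ for unit vectors $x_i$. The averaging identity supplies an index meeting this condition with a factor of $2$ to spare, and $p\neq0$ makes the inequality strict. One could instead choose $p$ to be the point of $K$ nearest the origin, but the argument works for any $p\in K$.
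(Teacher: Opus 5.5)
Your proof is correct and matches the paper's argument: averaging over a convex combination gives an index $i$ with $\langle x_i,p\rangle\geq\|p\|^2$, hence $\|x_i-p\|^2\leq 1-\|p\|^2<1$, and $p$ lies in the hull of the part not containing $i$. The differences are cosmetic. The paper takes $p$ to be the projection of $0$ onto $K$, which, as you note, is unnecessary. It also expands $p$ over all of $[n]$ and then says "without loss of generality" $i\in A$, whereas expanding over $A$ alone makes that step automatic.
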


\begin{proof}
Let $v$ denote the projection of $0$ onto $K$.
(Notably, $v$ is nonzero since $0\not\in K$.)
Since $v\in K\subseteq\operatorname{conv}\{x_i\}_{i\in[n]}$, we may express $v$ as a convex combination:
\[
v
=\sum_{i=1}^n a_ix_i.
\]
It follows that
\[
\|v\|^2
=\langle v,v\rangle
=\bigg\langle \sum_{i=1}^n a_ix_i,v\bigg\rangle
=\sum_{i=1}^n a_i\langle x_i,v\rangle
\leq\max_{i\in[n]}\langle x_i,v\rangle,
\]
i.e., there exists $j\in[n]$ such that $\langle x_j,v\rangle\geq\|v\|^2$.
This in turn implies
\[
\|x_j-v\|^2
=1-2\langle x_j,v\rangle+\|v\|^2
\leq 1-2\|v\|^2+\|v\|^2
=1-\|v\|^2.
\]
Without loss of generality, we have $j\in A$, and so
\[
v
\in K
\subseteq \operatorname{conv}\{x_i\}_{i \in B} 
\subseteq \operatorname{conv}\{x_i\}_{i \in [n]\setminus\{j\}}.
\]
As such,
\[
\delta(X)
\leq\operatorname{dist}(x_j,\operatorname{conv}\{x_i\}_{i\in[n]\setminus\{j\}})
\leq\|x_j-v\|
\leq\sqrt{1-\|v\|^2}
<1,
\]
where the last step uses the fact that $v$ is nonzero.
\end{proof}

\begin{lemma}
\label{lem: acute angle implies short distance}
Fix $d\geq 2$ and $n \geq d + 2$. 
For every configuration $X=\{x_i\}_{i\in[n]}$ in $S^{d-1}$, it holds that 
\[
\alpha(X)
> 0
\qquad
\Longrightarrow
\qquad
\delta(X) < 1.
\]
\end{lemma}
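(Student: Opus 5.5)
The plan is to combine Radon's theorem with Lemma~\ref{lem.K avoids 0}, and to handle separately the case that the lemma does not cover. Fix $i\neq j$ with $c:=\langle x_i,x_j\rangle>0$, which exists because $\alpha(X)>0$. Since $n\geq d+2$, Radon's theorem applied to the $n$ points $\{x_k\}_{k\in[n]}$ in $\mathbb{R}^d$ gives a partition $[n]=A\sqcup B$ with $A,B$ nonempty such that
\[
K:=\operatorname{conv}\{x_k\}_{k\in A}\cap\operatorname{conv}\{x_k\}_{k\in B}
\]
is nonempty. (If Radon's theorem is only wanted for exactly $d+2$ points, apply it to any $d+2$ of the points and put the remaining indices into $A$; this only enlarges $\operatorname{conv}\{x_k\}_{k\in A}$, so the intersection stays nonempty.) If $0\notin K$, then Lemma~\ref{lem.K avoids 0} immediately gives $\delta(X)<1$. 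So it remains to treat the case $0\in K$.

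Suppose $0\in K$, so that $0\in\operatorname{conv}\{x_k\}_{k\in A}$ and $0\in\operatorname{conv}\{x_k\}_{k\in B}$. The index $j$ lies in exactly one of $A$ or $B$; without loss of generality $j\in A$. Then
\[
0\in\operatorname{conv}\{x_k\}_{k\in B}\subseteq\operatorname{conv}\{x_k\}_{k\in[n]\setminus\{j\}}.
\]
Also $x_i$ belongs to this hull, since $i\neq j$. By convexity, the whole segment $\{t x_i : t\in[0,1]\}$ therefore lies in $\operatorname{conv}\{x_k\}_{k\in[n]\setminus\{j\}}$.

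Now bound the distance from $x_j$ to this segment using the point $t=c\in(0,1]$:
\[
\|x_j-c\,x_i\|^2=1-2c^2+c^2=1-c^2<1.
\]
This gives
\[
\delta(X)\leq\operatorname{dist}\bigl(x_j,\operatorname{conv}\{x_k\}_{k\in[n]\setminus\{j\}}\bigr)\leq\sqrt{1-c^2}<1.
\]
Here positivity of $c$ is used twice: it puts $c\,x_i$ inside the segment from $0$ to $x_i$, and it makes the final inequality strict.

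The only real obstacle is the case $0\in K$, where Lemma~\ref{lem.K avoids 0} gives nothing. The key observation is that in this case $0$ lies in the hull of the points on the side of the partition not containing $j$, which is what lets the acute angle between $x_i$ and $x_j$ be exploited.
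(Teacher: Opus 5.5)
Your proposal is correct and follows essentially the same route as the paper. You use Radon's theorem, dispose of the case $0\notin K$ via Lemma~\ref{lem.K avoids 0}, and when $0\in K$ you use the segment from $0$ to $x_i$ inside $\operatorname{conv}\{x_k\}_{k\neq j}$. Your explicit point $c\,x_i$ realizes the paper's distance $\sin\theta=\sqrt{1-c^2}$.
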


\begin{proof}
By Radon's theorem~\cite{Radon:21}, there exists a partition $[n]=A\sqcup B$ with both $A$ and $B$ nonempty such that the intersection
\[
K
:=\operatorname{conv}\{x_i\}_{i\in A} \cap \operatorname{conv}\{x_i\}_{i\in B}
\]
is also nonempty. 
By Lemma~\ref{lem.K avoids 0}, we may assume $0\in K$.
Fix $j,k\in[n]$ such that $\langle x_j, x_k \rangle > 0$, and denote $\theta:=\arccos\langle x_j, x_k \rangle\in[0,\frac{\pi}{2})$.
Without loss of generality, we have $j \in A$, and so
\[
0 
\in K
\subseteq \operatorname{conv}\{x_i\}_{i \in B} 
\subseteq \operatorname{conv}\{x_i\}_{i \in [n]\setminus\{j\}}.
\]
Since $\operatorname{conv}\{x_i\}_{i \in [n]\setminus\{j\}}$ contains both $x_k$ and $0$, by convexity, it also contains the entire line segment $\operatorname{conv}\{x_k,0\}$.
Thus,
\[
\delta(X)
\leq \operatorname{dist}(x_j,\operatorname{conv}\{x_i\}_{i\in[n]\setminus\{j\}})
\leq \operatorname{dist}(x_j,\operatorname{conv}\{x_k,0\})
=\sin\theta
<1.
\qedhere
\]
\end{proof}

\begin{proof}[Proof of Theorem~\ref{thm.softmax codes in orthoplex regime}]
The bound $\delta(X)\leq 1$ is given by Lemma~C.9 in~\cite{JiangEtal:24}.
Meanwhile, the ``only if'' direction of the ``Furthermore'' statement follows from combining Rankin's orthoplex bound (Proposition~\ref{prop.rankin orthoplex bound}) with Lemma~\ref{lem: acute angle implies short distance}.
It remains to prove the ``if'' direction.
To this end, suppose $\alpha(X)=0$.
Then for each $j\in[n]$, it holds that every $x_i$ with $i\neq j$ resides in the halfspace
\[
H_j
:=\{z\in\mathbb{R}^d:\langle z,x_j\rangle\leq 0\}.
\]
Since $H_j$ is convex, it follows that $\operatorname{conv}\{x_i\}_{i\in[n]\setminus\{j\}}\subseteq H_j$, and so
\[
\operatorname{dist}(x_j,\operatorname{conv}\{x_i\}_{i\in[n]\setminus\{j\}})
\geq \operatorname{dist}(x_j,H_j)
= 1,
\]
where the last step follows from the projection theorem.
Minimizing over $j\in[n]$ then gives $\delta(X)\geq 1$, but since $\delta(X)\leq 1$ in general, we conclude that $\delta(X)=1$.
\end{proof}

\section{Self duality from lack of rattlers}
\label{sec.self duality}

In this section, we show that hardmax codes in the orthoplex regime enjoy a notion of \textit{self duality}:

\begin{theorem}
\label{thm.selfduality}
Every hardmax code $(W=\{w_k\}_{k\in[n]},H=\{h_{k,i}\}_{k\in[n],i\in[m]})$ in the orthoplex regime satisfies $h_{k,i}=w_k$ for every $k\in[n]$ and $i\in[m]$.
\end{theorem}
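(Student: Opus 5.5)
Throughout, I read the hardmax objective with the sign convention that makes $H=W$ competitive. For each class $k$, the quantity being minimized is $\max_{k'\neq k}\langle w_{k'}-w_k,h_{k,i}\rangle$, i.e., the largest margin violation; this is the convention under which Theorem~C.7 in~\cite{JiangEtal:24} relates hardmax and softmax codes. The plan has three steps: identify the optimal value of $\mathcal{L}_{\operatorname{HM}}$ in the orthoplex regime as $-1$; show that $0$ lies in $\operatorname{conv}\{w_i\}_{i\neq k}$ for every $k$; then use that convex combination to force $h_{k,i}=w_k$.

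\emph{Step 1 (structure of $W$).} Let $(W,H)$ be a hardmax code. By Theorem~C.7 in~\cite{JiangEtal:24}, $W$ is a softmax code. Rankin's bound (Proposition~\ref{prop.rankin orthoplex bound}) provides configurations with $\alpha=0$, and Theorem~\ref{thm.softmax codes in orthoplex regime} gives $\delta=1$ for them, so softmax codes have $\delta(W)=1$. The same theorem then gives $\alpha(W)=0$.

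\emph{Step 2 ($0$ lies in every leave-one-out hull).} Since $n\geq d+2$, Radon's theorem yields a partition $[n]=A\sqcup B$ with $K=\operatorname{conv}\{w_i\}_{i\in A}\cap\operatorname{conv}\{w_i\}_{i\in B}\neq\emptyset$. If $0\notin K$, Lemma~\ref{lem.K avoids 0} would give $\delta(W)<1$, a contradiction, so $0\in K$. Fix $k$, and without loss of generality take $k\in A$. Then $0\in\operatorname{conv}\{w_i\}_{i\in B}\subseteq\operatorname{conv}\{w_i\}_{i\neq k}$, so we can write $0=\sum_{i\neq k}a_iw_i$ as a convex combination.

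\emph{Step 3 (value and rigidity).} For any unit vector $h$, averaging with these weights gives
\[
\max_{k'\neq k}\langle w_{k'}-w_k,h\rangle\geq\sum_{i\neq k}a_i\langle w_i-w_k,h\rangle=-\langle w_k,h\rangle\geq-1 .
\]
So the loss of $(W,H)$ is at least $-1$. Conversely, the competitor $(W,W)$ has loss $\max_{k\neq k'}(\langle w_{k'},w_k\rangle-1)\leq-1$ because $\alpha(W)=0$. Hence the optimal value is exactly $-1$. Every $h_{k,i}$ must therefore satisfy $\max_{k'\neq k}\langle w_{k'}-w_k,h_{k,i}\rangle\leq-1$, which forces equality throughout the display above, in particular $\langle w_k,h_{k,i}\rangle\geq 1$. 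Since $\|w_k\|=\|h_{k,i}\|=1$, the equality case of Cauchy--Schwarz gives $h_{k,i}=w_k$.

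The main obstacle is Step 2: we need $0$ in the hull of the \emph{other} vectors for every $k$ simultaneously. This is handled by the Radon partition together with Lemma~\ref{lem.K avoids 0}, since a single Radon point at $0$ lies in the hull of whichever side does not contain $k$. A secondary point is checking that the sign convention in $\mathcal{L}_{\operatorname{HM}}$ matches the one used in Theorem~C.7 of~\cite{JiangEtal:24}, so that $W$ being a softmax code is legitimately inherited.
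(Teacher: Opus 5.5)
Your proof is correct, but it follows a different route from the paper.

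The paper never computes the hardmax optimum. It checks the hypotheses of Theorem~3.7 of~\cite{JiangEtal:24} (Proposition~\ref{prop.selfduality from lack of rattlers}):
\begin{itemize}
\item no softmax rattlers (Lemma~\ref{softmax-norattler}, proved by the same Radon argument as your Step~2);
\item no Tammes rattlers (Lemma~\ref{lem.no Tammes rattlers}, which needs the structure theorem of~\cite{CoxKMP:20});
\item softmax codes coincide with spherical codes (Theorem~\ref{thm.softmax codes in orthoplex regime}).
\end{itemize}
It then applies that equivalence as a black box.

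You instead turn ``$0\in\operatorname{conv}\{w_i\}_{i\neq k}$ for every $k$'' directly into the pointwise bound $\max_{k'\neq k}\langle w_{k'}-w_k,h\rangle\geq-\langle w_k,h\rangle\geq-1$. You match it against the competitor $(W,W)$ and extract $h_{k,i}=w_k$ from the equality case of Cauchy--Schwarz. This avoids Theorem~3.7 of~\cite{JiangEtal:24}, the classification in~\cite{CoxKMP:20}, and Lemma~\ref{lem.no Tammes rattlers} altogether, and it identifies the optimal hardmax value $-1$ explicitly. The paper's route buys a uniform explanation instead: it places the orthoplex regime in the same rattler framework that handles $d=2$ and $n\leq d+1$.

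Your reading of the sign in $\mathcal{L}_{\operatorname{HM}}$ is the intended one, since it is what the $\tau\to0$ limit of $\mathcal{L}_{\operatorname{CE}}^{(\tau)}$ produces.

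You could even drop Step~1, and with it the appeal to Theorem~C.7 of~\cite{JiangEtal:24}. Any $n$ of the points $\pm e_1,\ldots,\pm e_d$ give a competitor of value $-1$. Hence every $h_{k,1}$ satisfies $\langle z-w_k,h_{k,1}\rangle\leq-1$ for all $z\in\operatorname{conv}\{w_{k'}\}_{k'\neq k}$. This already gives $\delta(W)\geq1$, and Lemma~\ref{lem.K avoids 0} then yields your Step~2.
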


Our proof uses an important result from~\cite{JiangEtal:24}, which we state below after a requisite definition.

\begin{definition}\
\begin{itemize}
\item[(a)]
A \textbf{softmax rattler} of $X = \{x_i\}_{i\in[n]}$ is an index $j\in[n]$ such that
\[
\operatorname{dist}(x_j, \operatorname{conv}\{x_i\}_{i \in [n] \setminus \{j\}})
>\delta(X).
\]
\item[(b)]
A \textbf{Tammes rattler} of $X = \{x_i\}_{i\in[n]}$ is an index $j\in[n]$ such that
\[
\max_{i\in[n]\setminus\{j\}}\langle x_i,x_j\rangle
<\alpha(X).
\]
\end{itemize}
\end{definition}

We are now ready to reveal our proof technique for Theorem~\ref{thm.selfduality}, which factors through the following:

\begin{proposition}[Theorem~3.7 in~\cite{JiangEtal:24}]
\label{prop.selfduality from lack of rattlers}
Fix $d$ and $n$ for which every size-$n$ softmax code in $S^{d-1}$ has no softmax rattler and every size-$n$ spherical code has no Tammes rattler.
Then the following statements are equivalent:
\begin{itemize}
\item[(a)] 
Every hardmax code $(W,H)$ satisfies $h_{k,i}=w_k$ for every $k\in[n]$ and $i\in[m]$.
\item[(b)] 
The size-$n$ softmax codes in $S^{d-1}$ are precisely the size-$n$ spherical codes in $S^{d-1}$.
\end{itemize}
\end{proposition}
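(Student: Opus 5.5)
The plan is to reduce everything to a per-class margin computation for a fixed $W$, and then to compare the two natural choices of features: the optimal feature for each class, and $h_{k,i}=w_k$.

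\textbf{Per-class reduction.} Fix $W$, and interpret the hardmax objective (up to the sign convention) as maximizing the worst-case margin $\min_{k'\neq k}\langle w_k-w_{k'},h_{k,i}\rangle$. For a single class $k$, a minimax argument over the simplex of weights on $\{w_{k'}\}_{k'\neq k}$ gives
\[
\max_{\|h\|\leq 1}\min_{k'\neq k}\langle w_k-w_{k'},h\rangle
=\operatorname{dist}\big(w_k,\operatorname{conv}\{w_{k'}\}_{k'\neq k}\big).
\]
The maximizer is unique whenever this distance is positive: it is $h=(w_k-v_k)/\|w_k-v_k\|$, where $v_k$ is the projection of $w_k$ onto that hull. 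Consequently the optimal hardmax value is $\delta^*:=\max_X\delta(X)$.

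Two further facts follow directly.
\begin{itemize}
\item Choosing $h=w_k$ yields the margin $1-\max_{k'\neq k}\langle w_k,w_{k'}\rangle$. This margin is at most the distance above, so $1-\alpha(X)\leq\delta(X)\leq\delta^*$ for every $X$.
\item If $X$ is a softmax code with no softmax rattler, every class attains exactly $\delta^*$. If $X$ has no Tammes rattler, every class attains $\max_{k'\neq k}\langle w_k,w_{k'}\rangle=\alpha(X)$.
\end{itemize}

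\textbf{(a)$\Rightarrow$(b).} Take a softmax code $W$ and extend it to a hardmax code (Theorem~C.7 in~\cite{JiangEtal:24}). By (a) the extension has $H=W$, so its value is $1-\alpha(W)=\delta^*$. Any $X$ with $\alpha(X)<\alpha(W)$ would give $\delta(X)\geq 1-\alpha(X)>\delta^*$, which is a contradiction. Hence $W$ is a spherical code and $1-\alpha^*=\delta^*$. Conversely, a spherical code $X$ satisfies $\delta(X)\geq 1-\alpha^*=\delta^*$, so $X$ is a softmax code.

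\textbf{(b)$\Rightarrow$(a).} Let $(W,H)$ be a hardmax code. Then $W$ is a softmax code (Theorem~C.7), and hence by (b) also a spherical code. By the absence of softmax rattlers, each class $k$ has distance exactly $\delta^*$. Optimality of $(W,H)$ forces each $h_{k,i}$ to attain that margin, so by uniqueness $h_{k,i}=(w_k-v_k)/\|w_k-v_k\|$. By the absence of Tammes rattlers, $w_k$ itself attains margin $1-\alpha^*$. If $1-\alpha^*=\delta^*$, then $w_k$ is also a maximizer, and uniqueness gives $h_{k,i}=w_k$.

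\textbf{Main obstacle.} The remaining step, and the main obstacle, is proving the identity $1-\alpha^*=\delta^*$ from (b) alone. I would first try to show that the margin-optimal direction $w_k-v_k$ must be parallel to $w_k$ when both $\alpha$ and $\delta$ are balanced across all classes. This would use the KKT conditions for the spherical-code problem at a rattler-free optimum: writing $w_k$ as a nonnegative combination of $w_k-w_{k'}$ over its active neighbours ties $v_k$ to the direction of $w_k$.

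\textbf{Application to Theorem~\ref{thm.selfduality}.} In the orthoplex regime this obstacle disappears, since Theorem~\ref{thm.softmax codes in orthoplex regime} together with Proposition~\ref{prop.rankin orthoplex bound} gives $\delta^*=1=1-\alpha^*$ directly. The no-rattler hypotheses are also immediate there: every softmax code has $\alpha=0$, so each index has an orthogonal neighbour, and each distance is exactly $1$.
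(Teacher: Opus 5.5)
The paper gives no proof of this proposition; it imports it as Theorem~3.7 of Jiang et al. On its own merits, your per-class reduction is correct: the best margin for class $k$ is $\operatorname{dist}(w_k,\operatorname{conv}\{w_{k'}\}_{k'\neq k})$, it is attained uniquely at $(w_k-v_k)/\|w_k-v_k\|$, and $1-\alpha(X)\leq\delta(X)$. Your proof of (a)$\Rightarrow$(b) is complete, and it does not even use the rattler hypotheses.

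The gap is in (b)$\Rightarrow$(a). There your argument is conditional on $\delta^*=1-\alpha^*$, which you leave open. This identity is the whole content of that direction: if $\delta^*>1-\alpha^*$, then $(W,W)$ has value $1-\alpha(W)<\delta^*$ and is not a hardmax code, so (a) fails. The proposition is asserted for every $(d,n)$ meeting the rattler hypotheses, so invoking Theorem~\ref{thm.softmax codes in orthoplex regime} and Rankin's bound only settles the orthoplex case. Your KKT sketch does not close the gap as phrased either. The KKT conditions do not force the multipliers at every non-rattler $k$ to be nonzero, so they cannot directly give that $w_k-v_k$ is parallel to $w_k$ for each $k$.

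The missing step uses the hypothesis that \emph{every} spherical code is free of Tammes rattlers; you only ever apply it to the given $W$.
Let $X$ be any spherical code, fix $k$, and let $N_k=\{j\neq k:\langle x_j,x_k\rangle=\alpha^*\}$, which is nonempty.
Suppose $\alpha^*x_k\notin\operatorname{conv}\{x_j\}_{j\in N_k}$. Strictly separate the two inside the hyperplane $\{z:\langle z,x_k\rangle=\alpha^*\}$ to get $u\perp x_k$ with $\langle u,x_j\rangle<0$ for all $j\in N_k$.
Now replace $x_k$ by $(x_k+\epsilon u)/\|x_k+\epsilon u\|$ with $\epsilon>0$ small. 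This strictly lowers every active inner product at $k$ and keeps all others at most $\alpha^*$.
The new configuration either has $\alpha<\alpha^*$, or is a spherical code in which $k$ is a Tammes rattler. Both are impossible.
Hence $\alpha^*x_k\in\operatorname{conv}\{x_j\}_{j\neq k}$, so $\operatorname{dist}(x_k,\operatorname{conv}\{x_j\}_{j\neq k})\leq 1-\alpha^*$. Together with your inequality this gives $\delta(X)=1-\alpha^*$.
By (b), $X$ is a softmax code, so $\delta^*=1-\alpha^*$, and your uniqueness argument finishes the proof.

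Your KKT route can also be repaired. Take Fritz John multipliers for minimizing $t$ subject to $\langle x_k,x_j\rangle\leq t$ and $\|x_k\|=1$. The objective multiplier must be positive, so the contact multipliers sum to $1$. Any $k$ carrying positive total multiplier then has $\alpha^*x_k$ in the convex hull of its active neighbours. A single such $k$ already gives $\delta(X)\leq 1-\alpha^*$, which is all you need.

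Finally, a smaller point in your orthoplex paragraph. The claim ``$\alpha=0$, so each index has an orthogonal neighbour, and each distance is exactly $1$'' does not follow from $\alpha=0$ alone. Both facts follow from $0\in\operatorname{conv}\{x_i\}_{i\neq k}$ for every $k$, which is what the Radon argument in Lemma~\ref{softmax-norattler} provides.
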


In~\cite{JiangEtal:24}, it was established that the softmax codes and spherical codes in both the $d=2$ and $n\leq d+1$ regimes are identical and rattle-free (in both senses), and so Proposition~\ref{prop.selfduality from lack of rattlers} gives that these codes are self dual.
We will replicate this approach to prove self duality in the orthoplex regime.
It remains to prove that these codes are rattle-free in both senses.

\begin{lemma}\label{softmax-norattler}
Every softmax code in the orthoplex regime has no softmax rattler.  
\end{lemma}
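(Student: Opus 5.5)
By Theorem~\ref{thm.softmax codes in orthoplex regime}, a softmax code $X=\{x_i\}_{i\in[n]}$ in the orthoplex regime satisfies $\delta(X)=1$ and $\alpha(X)=0$. So it suffices to show that every $j\in[n]$ satisfies
\[
\operatorname{dist}(x_j,\operatorname{conv}\{x_i\}_{i\in[n]\setminus\{j\}})\leq 1.
\]
Equality then follows from the definition of $\delta$ as a minimum, so no index is a softmax rattler. The plan is to show that, for every $j$, the origin lies in $\operatorname{conv}\{x_i\}_{i\in[n]\setminus\{j\}}$. Then the distance is at most $\|x_j-0\|=1$.

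\textbf{Step 1: locate the origin via Radon's theorem.} Since $n\geq d+2$, Radon's theorem gives a partition $[n]=A\sqcup B$ with both parts nonempty such that $K:=\operatorname{conv}\{x_i\}_{i\in A}\cap\operatorname{conv}\{x_i\}_{i\in B}$ is nonempty. Because $\delta(X)=1$, the contrapositive of Lemma~\ref{lem.K avoids 0} forces $0\in K$. Hence $0$ lies in both $\operatorname{conv}\{x_i\}_{i\in A}$ and $\operatorname{conv}\{x_i\}_{i\in B}$. For any $j$, either $j\in A$ or $j\in B$, so $0$ lies in the convex hull of the other part. That part is a subset of $[n]\setminus\{j\}$, so $0\in\operatorname{conv}\{x_i\}_{i\in[n]\setminus\{j\}}$.

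\textbf{Step 2: conclude.} Step~1 gives $\operatorname{dist}(x_j,\operatorname{conv}\{x_i\}_{i\neq j})\leq\|x_j\|=1=\delta(X)$ for every $j$. Therefore no $j$ satisfies the strict inequality in the definition of a softmax rattler.

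I do not expect a real obstacle; the only step needing care is Step~1. Lemma~\ref{lem.K avoids 0} is stated for an arbitrary Radon partition with nonempty $K$. The partition coming from Radon's theorem satisfies its hypotheses, and since $\delta(X)=1$ rules out its conclusion $\delta(X)<1$, it rules out $0\notin K$. This argument uses only $\delta(X)=1$, so Theorem~\ref{thm.softmax codes in orthoplex regime} is needed only for the bound $\delta\leq1$, which shows that softmax codes attain $\delta=1$ (achievability follows from $\pm e_i$ with $n\leq 2d$).
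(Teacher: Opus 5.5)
Your proposal is correct and follows the paper's proof essentially step for step. Radon's theorem together with the contrapositive of Lemma~\ref{lem.K avoids 0} puts $0$ in the convex hull of the part not containing $j$, which gives $\operatorname{dist}(x_j,\operatorname{conv}\{x_i\}_{i\neq j})\leq 1=\delta(X)$ for every $j$.
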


\begin{proof}
Fix a softmax code $X=\{x_i\}_{i\in[n]}$ in the orthoplex regime, meaning $\delta(X)=1$.
By Radon's theorem~\cite{Radon:21}, there exists a partition $[n]=A\sqcup B$ with both $A$ and $B$ nonempty such that the intersection
\[
K
:=\operatorname{conv}\{x_i\}_{i\in A} \cap \operatorname{conv}\{x_i\}_{i\in B}
\]
is also nonempty. 
By Lemma~\ref{lem.K avoids 0}, we have $0 \in K$. 

Consider any $j\in[n]$.
Without loss of generality, we have $j \in A$, and so 
\[
0
\in K
\subseteq \operatorname{conv}\{x_i\}_{i\in B}
\subseteq \operatorname{conv}\{x_i\}_{i\in[n]\setminus\{j\}}.
\]
It follows that
\[
1
=\delta(X)
\leq\operatorname{dist}(x_j,\operatorname{conv}\{x_i\}_{i\in[n]\setminus\{j\}} )
\leq \|x_j-0\|
=1,
\]
and so
\[
\operatorname{dist}(x_j,\operatorname{conv}\{x_i\}_{i\in[n]\setminus\{j\}} )
=\delta(X).
\]
Since $j$ was arbitrary, we conclude that $X$ has no softmax rattler.
\end{proof}

\begin{lemma}
\label{lem.no Tammes rattlers}
Every spherical code in the orthoplex regime has no Tammes rattler.
\end{lemma}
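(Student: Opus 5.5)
Throughout, fix a spherical code $X=\{x_i\}_{i\in[n]}$ in the orthoplex regime $d+2\leq n\leq 2d$. By Rankin's orthoplex bound (Proposition~\ref{prop.rankin orthoplex bound}), $\alpha(X)=0$. So a Tammes rattler would be an index $j$ with $\langle x_i,x_j\rangle<0$ for every $i\neq j$. The plan is to assume such a $j$ exists and derive a contradiction by showing that $0$ lies in $\operatorname{conv}\{x_i\}_{i\in[n]\setminus\{j\}}$.

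First, Theorem~\ref{thm.softmax codes in orthoplex regime} gives $\delta(X)=1$, since $\alpha(X)=0$. Next, I repeat the Radon argument from the proof of Lemma~\ref{softmax-norattler}. Since $n\geq d+2$, Radon's theorem gives a partition $[n]=A\sqcup B$ into nonempty parts with
\[
K:=\operatorname{conv}\{x_i\}_{i\in A}\cap\operatorname{conv}\{x_i\}_{i\in B}
\]
nonempty. Because $\delta(X)=1$ is not less than $1$, the contrapositive of Lemma~\ref{lem.K avoids 0} forces $0\in K$. Without loss of generality $j\in A$, so
\[
0\in\operatorname{conv}\{x_i\}_{i\in B}\subseteq\operatorname{conv}\{x_i\}_{i\in[n]\setminus\{j\}}.
\]

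Now write $0=\sum_{i\neq j}a_ix_i$ with $a_i\geq0$ and $\sum_{i\neq j} a_i=1$. Taking the inner product with $x_j$ gives
\[
0=\sum_{i\neq j}a_i\langle x_i,x_j\rangle.
\]
The right-hand side is strictly negative: every term satisfies $a_i\langle x_i,x_j\rangle\leq 0$, and at least one $a_i$ is positive with $\langle x_i,x_j\rangle<0$. This is a contradiction. Hence for every $j$ there is some $i\neq j$ with $\langle x_i,x_j\rangle=0=\alpha(X)$, and $X$ has no Tammes rattler.

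There is no real obstacle here. The only point needing care is the step guaranteeing $0\in\operatorname{conv}\{x_i\}_{i\neq j}$ for \emph{every} $j$. This comes from applying Lemma~\ref{lem.K avoids 0}, valid because $\delta(X)=1$, together with the fact that each $j$ lies in exactly one side of the Radon partition.
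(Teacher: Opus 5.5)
Your proof is correct, but it takes a different route from the paper's.

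The paper disposes of this lemma in one line by invoking the external classification of Rankin-optimal configurations (Proposition~\ref{prop.orthoplex structure theorem}). There are $l\geq n-d\geq 2$ mutually orthogonal nonempty batches, so every $x_j$ is orthogonal to the vectors of some other batch; the paper leaves this last deduction implicit.

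You instead stay entirely inside the paper's own toolkit:
\begin{itemize}
\item The ``if'' direction of Theorem~\ref{thm.softmax codes in orthoplex regime} gives $\delta(X)=1$.
\item Radon's theorem together with Lemma~\ref{lem.K avoids 0} then places $0$ in $\operatorname{conv}\{x_i\}_{i\neq j}$ for every $j$, exactly as in Lemma~\ref{softmax-norattler}.
\item The sign argument $0=\sum_{i\neq j}a_i\langle x_i,x_j\rangle$ with all terms $\leq 0$ finishes.
\end{itemize}

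What your route buys:
\begin{itemize}
\item It is self-contained, needing nothing beyond Radon's theorem and the paper's earlier lemmas.
\item It puts both rattler-freeness lemmas on the same Radon footing.
\item Argued directly rather than by contradiction, it gives something stronger: $x_j$ is orthogonal to \emph{every} $x_i$ carrying positive weight in the convex combination representing $0$.
\item It only uses $n\geq d+2$ once $\alpha(X)=0$ is known; the bound $n\leq 2d$ enters solely through Rankin.
\end{itemize}

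The paper's route is shorter because the structure theorem is needed anyway for the batch decomposition in Section~\ref{sec.nonequal}. It also yields orthogonality to whole batches rather than to the support of one convex combination.
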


Our proof of Lemma~\ref{lem.no Tammes rattlers} makes use of a classification of spherical codes in the orthoplex regime:

\begin{proposition}[Theorem~2 in~\cite{CoxKMP:20}]
\label{prop.orthoplex structure theorem}
Fix $d\geq 2$ and $n\in[d+2,2d]$, and consider any configuration $\{x_i\}_{i\in[n]}$ in $S^{d-1}$ that achieves equality in Rankin's orthoplex bound.
Then there exists a possibly empty subset $S_0\subseteq[n]$ and a partition $S_1\sqcup\cdots\sqcup S_l=[n]\setminus S_0$ with $l\geq n-d$ such that
\begin{itemize}
\item[(a)]
$|S_0| = \operatorname{dim}\operatorname{span}\{x_i\}_{i\in S_0}$,
\item[(b)]
$|S_j| = \operatorname{dim}\operatorname{span}\{x_i\}_{i\in S_j}+1$ for each j $\in [l]$, and
\item[(c)]
$\operatorname{span}\{x_i\}_{i\in S_j} \perp \operatorname{span}\{x_i\}_{i\in S_{j'}}$ whenever $j\neq j'$.
\end{itemize}
\end{proposition}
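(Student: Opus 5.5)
The plan is to decompose the configuration by the sign pattern of its Gram matrix and then show that each piece has at most one linear dependency. Since equality holds in Rankin's orthoplex bound, $\alpha(X)=0$, i.e., $\langle x_i,x_j\rangle\leq 0$ for all $i\neq j$.

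\textbf{Step 1: components.} Let $G$ be the graph on $[n]$ with an edge $\{i,j\}$ whenever $\langle x_i,x_j\rangle<0$. Vectors in different connected components of $G$ have inner product exactly $0$. Hence the spans of distinct components are mutually orthogonal, and the dimension of the span of any union of components is the sum of the individual dimensions.

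\textbf{Step 2: dependencies within a component.} Fix a component $C$ and a nontrivial dependency $\sum_{i\in C}c_ix_i=0$.
\begin{itemize}
\item Split the coefficients by sign and set $u:=\sum_{c_i>0}c_ix_i=\sum_{c_i<0}|c_i|x_i$. Then $\|u\|^2$ is a sum of products $c_i|c_{i'}|\langle x_i,x_{i'}\rangle$ with $i\neq i'$, each of which is $\leq0$. So $u=0$.
\item Let $P=\{i:c_i>0\}$, and assume without loss of generality that $P\neq\emptyset$ (replace $c$ by $-c$ if needed). For $k\in C\setminus P$, we have $0=\langle u,x_k\rangle=\sum_{i\in P}c_i\langle x_i,x_k\rangle$. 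Each term is $\leq0$, so each is $0$, and there are no edges between $P$ and $C\setminus P$.
\item By connectivity, $P=C$. Thus every nontrivial dependency on $C$ has full support with coefficients of a single strict sign.
\end{itemize}

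\textbf{Step 3: at most one dependency.} Suppose $c$ and $c'$ are two independent dependency vectors. A suitable linear combination of them vanishes on some coordinate but is nonzero, which contradicts full support. So the dependency space of $C$ has dimension at most $1$, and therefore $\dim\operatorname{span}\{x_i\}_{i\in C}\in\{|C|,|C|-1\}$.

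\textbf{Step 4: assembling the partition.}
\begin{itemize}
\item Let $S_0$ be the union of the components of the first type (dimension $|C|$). Let $S_1,\dots,S_l$ be the components of the second type (dimension $|C|-1$).
\item Property (a) follows from the additivity of dimension in Step 1. Property (b) follows from Step 3, and property (c) from Step 1.
\item Counting dimensions gives $d\geq\dim\operatorname{span}X=|S_0|+\sum_{j}(|S_j|-1)=n-l$, so $l\geq n-d$.
\end{itemize}

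The only delicate point is Step 2, the sign and support argument: the key move is to exploit nonpositive inner products twice, first to force $u=0$ and then to cut the edges between $P$ and its complement. The remaining steps are bookkeeping.
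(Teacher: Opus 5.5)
Your proof is correct and complete. The paper never proves this proposition; it imports it as a black box (Theorem~2 of \cite{CoxKMP:20}). So there is no in-paper argument to compare against, and your write-up works as a self-contained replacement for the citation.

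Your route is the classical argument for systems of vectors with pairwise nonpositive inner products, the same mechanism behind Rankin's $2d$ bound. Nonpositivity of the off-diagonal Gram entries forces $u=0$. Testing $u$ against $x_k$ then cuts every edge between the positive support and the rest of the component.

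The argument makes several things visible that the bare citation hides:
\begin{itemize}
\item The only hypothesis actually used is $\langle x_i,x_j\rangle\le 0$ for $i\neq j$. The range $d+2\le n\le 2d$ matters only for such configurations to exist and for $l\ge n-d\ge 2$ to be informative.
\item The partition is canonical: the $S_j$ are exactly the connected components of the graph of strictly negative inner products, and $S_0$ collects the linearly independent ones.
\item You get extra structure for free. The unique dependency on each $S_j$ has strictly positive coefficients, so it has nonzero coefficient sum. Hence $\{x_i\}_{i\in S_j}$ is affinely independent, i.e., the vertex set of a $(|S_j|-1)$-simplex with the origin in its relative interior. This fits naturally with the convexity viewpoint of Section~\ref{sec.softmax orthoplex} and with the simplex-batch picture of Section~\ref{sec.nonequal}.
\end{itemize}

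One cosmetic point: in Step~4, say explicitly that (c) also covers the pairs $(S_0,S_j)$. This holds because $S_0$ is a union of components distinct from each $S_j$. Your appeal to Step~1 already gives it, but it is worth stating.
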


\begin{proof}[Proof of Lemma~\ref{lem.no Tammes rattlers}]
Fix a spherical code $X=\{x_i\}_{i\in[n]}$ in the orthoplex regime, meaning $\alpha(X)=0$.
Consider any $j\in[n]$.
By Proposition~\ref{prop.orthoplex structure theorem}, there exists $i\in[n]\setminus\{j\}$ such that $\langle x_i,x_j\rangle=0=\alpha(X)$.
Since $j$ was arbitrary, we conclude that $X$ has no Tammes rattler.
\end{proof}

\begin{proof}[Proof of Theorem~\ref{thm.selfduality}]
Fix $d$ and $n$ such that $d+2\leq n\leq 2d$.
By Lemma~\ref{softmax-norattler}, every size-$n$ softmax code in $S^{d-1}$ has no softmax rattler, and by Lemma~\ref{lem.no Tammes rattlers}, every size-$n$ spherical code has no Tammes rattler.
By Theorem~\ref{thm.softmax codes in orthoplex regime}, the size-$n$ softmax codes in $S^{d-1}$ are precisely the size-$n$ spherical codes in $S^{d-1}$.
The result then follows from Proposition~\ref{prop.selfduality from lack of rattlers}.
\end{proof}

\section{Not all softmax codes are created equal}
\label{sec.nonequal}

At this point, we know all softmax codes in the orthoplex regime, and we know they are self dual, meaning they each extend uniquely to a hardmax code.
In this section, we return to the original nonzero temperature setting.
It turns out that in this setting, some softmax codes are preferred over others, and the preference is determined by the temperature.
We start with a definition.

\begin{definition}\
\begin{itemize}
\item[(a)]
A softmax code is said to be \textbf{low-entropy} with parameter $p$ if it consists of the vertices of a $p$-point regular simplex along with an orthoplex, both of which are orthogonal to each other and together span the ambient space.
\item[(b)]
A softmax code is said to be  \textbf{high-entropy} with parameter $p$ if it consists of the vertices of mutually orthogonal $p$- or $(p+1)$-point regular simplices that together span the ambient space.
\end{itemize}
\end{definition}

See Figure~\ref{fig.softmax_codes} for an illustration of low- and high-entropy softmax codes.
The vertices of every orthoplex (in which $n=2d$) are simultaneously low-entropy with $p=2$ and high-entropy with $p=2$.
Similarly, the vertices of a triangular bipyramid (in which $d=3$ and $n=5$) are simultaneously low-entropy with $p=3$ and high-entropy with $p=2$.
For an example in which these notions do not coincide, we need to go up to $d=4$ and $n=6$.
Here, a low-entropy softmax code (with $p=4$) is obtained by taking the vertices of a tetrahedron in a $3$-dimensional subspace and a pair of antipodal points in the orthogonal complement.
Meanwhile, a high-entropy softmax code (with $p=2$) is obtained by placing the vertices of an equilateral triangle in each of two orthogonal planes.

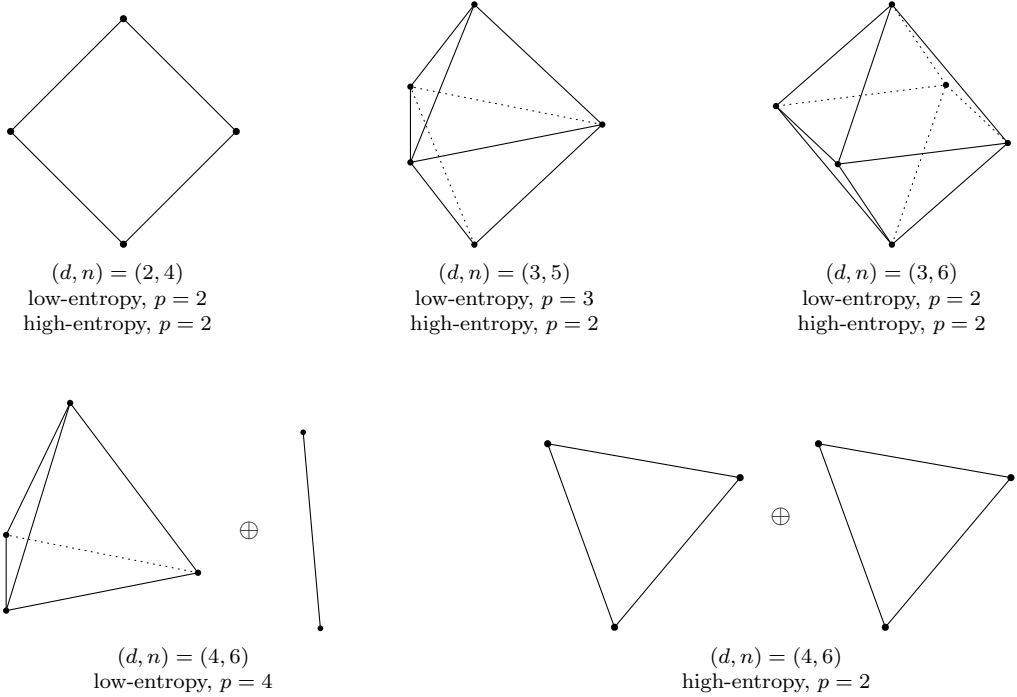
\begin{figure}[t]
\centering
\begin{tikzpicture}[scale=1.5, line cap=round, line join=round]
    \coordinate (A1) at ( 1, 0);
    \coordinate (B1) at (0, 1);
    \coordinate (C1) at (-1, 0);
    \coordinate (D1) at (0, -1);
    \draw (A1)--(B1)--(C1)--(D1)--cycle;
    \foreach \v in {A1,B1,C1,D1}{
      \fill (\v) circle (0.9pt);
    }
\end{tikzpicture}
\qquad\qquad\qquad
\begin{tikzpicture}[scale=1.7, line cap=round, line join=round]
  \tdplotsetmaincoords{70}{120}
  \begin{scope}[tdplot_main_coords]
    \coordinate (A) at ( 1, 0, 0);
    \coordinate (B) at (-0.5, {0.8660254}, 0);
    \coordinate (C) at (-0.5, {-0.8660254}, 0);

    \coordinate (T) at (0,0, 1);
    \coordinate (Btm) at (0,0,-1);

    \draw (C)--(A)--(B);
    \draw[dotted] (B)--(C);

    \foreach \v in {A,B,C}{
      \draw (T) -- (\v);
    }
    \draw (Btm) -- (A);
    \draw (Btm) -- (B);
    \draw[dotted] (Btm) -- (C);

    \foreach \v in {A,B,C,T,Btm}{
      \fill (\v) circle (0.7pt);
    }

  \end{scope}
\end{tikzpicture}
\qquad\qquad\qquad
\begin{tikzpicture}[scale=1.7, line cap=round, line join=round]
  \tdplotsetmaincoords{70}{115}
  \begin{scope}[tdplot_main_coords]
    \coordinate (xp) at ( 1, 0, 0);
    \coordinate (xm) at (-1, 0, 0);
    \coordinate (yp) at ( 0, 1, 0);
    \coordinate (ym) at ( 0,-1, 0);
    \coordinate (zp) at ( 0, 0, 1);
    \coordinate (zm) at ( 0, 0,-1);

    \foreach \a in {xp,yp,ym}{
      \draw (zp) -- (\a);
      \draw (zm) -- (\a);
    }
    \draw[dotted] (zp)--(xm);
    \draw[dotted] (zm)--(xm);
    \draw[dotted] (yp)--(xm)--(ym);
    \draw (ym)--(xp)--(yp);

    \foreach \v in {xp,xm,yp,ym,zp,zm}{
      \fill (\v) circle (0.7pt);
    }

  \end{scope}
\end{tikzpicture}

\begin{minipage}[t]{0.3\linewidth}
\centering
\footnotesize
$(d,n)=(2,4)$\\
low-entropy, $p=2$\\
high-entropy, $p=2$
\end{minipage}
\quad
\begin{minipage}[t]{0.3\linewidth}
\centering
\footnotesize
$(d,n)=(3,5)$\\
low-entropy, $p=3$\\
high-entropy, $p=2$
\end{minipage}
\quad
\begin{minipage}[t]{0.3\linewidth}
\centering
\footnotesize
$(d,n)=(3,6)$\\
low-entropy, $p=2$\\
high-entropy, $p=2$
\end{minipage}
\qquad

\vspace{24pt}

\begin{tikzpicture}[scale=1.7, line cap=round, line join=round]
  \tdplotsetmaincoords{70}{120}
  \begin{scope}[xshift=-1.4cm]
    \begin{scope}[tdplot_main_coords]
      \coordinate (A) at ( 1, 0, -1.414/4);
      \coordinate (B) at (-0.5, {0.8660254}, -1.414/4);
      \coordinate (C) at (-0.5, {-0.8660254}, -1.414/4);

      \coordinate (T) at (0,0,1.414/4*3);

      \draw (C)--(A)--(B);
      \draw[dotted] (B)--(C);
      \draw (T)--(A);
      \draw (T)--(B);
      \draw (T)--(C);

      \foreach \v in {A,B,C,T}{
        \fill (\v) circle (0.7pt);
      }

    \end{scope}
  \end{scope}

  \node at (0,0) {$\oplus$};

  \begin{scope}[scale=0.7, xshift=0.7cm, rotate=5]
    \draw (0,-1.1) -- (0,1.1);
    \fill (0, 1.1) circle (0.9pt);
    \fill (0,-1.1) circle (0.9pt);

  \end{scope}
\end{tikzpicture}
\qquad\qquad\qquad\qquad
\begin{tikzpicture}[scale=1.5, line cap=round, line join=round]
  \begin{scope}[xshift=-1.3cm, rotate=20]
    \coordinate (A1) at ( 1, 0);
    \coordinate (B1) at (-0.5, {0.8660254});
    \coordinate (C1) at (-0.5, {-0.8660254});
    \draw (A1)--(B1)--(C1)--cycle;
    \foreach \v in {A1,B1,C1}{
      \fill (\v) circle (0.9pt);
    }

  \end{scope}

  \node at (0,0) {$\oplus$};

  \begin{scope}[xshift=1.1cm, rotate=20]
    \coordinate (A2) at ( 1, 0);
    \coordinate (B2) at (-0.5, {0.8660254});
    \coordinate (C2) at (-0.5, {-0.8660254});
    \draw (A2)--(B2)--(C2)--cycle;
    \foreach \v in {A2,B2,C2}{
      \fill (\v) circle (0.9pt);
    }

  \end{scope}
\end{tikzpicture}

\begin{minipage}[t]{0.2\linewidth}
\centering
\footnotesize
$(d,n)=(4,6)$\\
low-entropy, $p=4$
\end{minipage}
\qquad\qquad\qquad\qquad\qquad\quad
\begin{minipage}[t]{0.3\linewidth}
\centering
\footnotesize
$(d,n)=(4,6)$\\
high-entropy, $p=2$
\end{minipage}

\vspace{12pt}

\caption{Smallest examples of low- and high-entropy softmax codes.\label{fig.softmax_codes}}
\end{figure}

In what follows, we let $\operatorname{SMC}(d,n)$ denote the set of all $n$-point softmax codes in $S^{d-1}$.
We are now ready to state the main result of this section.

\begin{theorem}
\label{thm.low high entropy}
Fix $d$ and $n$ such that $d+2\leq n \leq 2d$.
There exist thresholds
\[
0<\tau_-(n)<\tau_+(n)<\infty
\]
such that for each temperature $\tau<\tau_-(n)$ (resp.\ $\tau>\tau_+(n)$), the minimizers of $\mathcal{L}_{\operatorname{CE}}^{(\tau)}(W,W)$ subject to $W\in\operatorname{SMC}(d,n)$ are low- (resp.\ high-) entropy softmax codes.
\end{theorem}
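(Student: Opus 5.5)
The approach is to reduce the minimization over $\operatorname{SMC}(d,n)$ to a discrete optimization over block sizes, using the structure theorem, and then analyze that discrete problem asymptotically as $\tau\to0$ and $\tau\to\infty$.

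\textbf{Writing the loss via the Gram matrix.} By Theorem~\ref{thm.softmax codes in orthoplex regime}, $W\in\operatorname{SMC}(d,n)$ exactly when $\alpha(W)=0$. Writing $g_{jk}=\langle w_j,w_k\rangle\le 0$, the loss becomes
\[
\mathcal{L}_{\operatorname{CE}}^{(\tau)}(W,W)=\frac1n\sum_{k=1}^n\log\Big(1+e^{-1/\tau}\sum_{j\neq k}e^{g_{jk}/\tau}\Big).
\]
So only the Gram matrix matters, and more negative inner products help.

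\textbf{Reduction to block sizes.} Proposition~\ref{prop.orthoplex structure theorem} splits $[n]$ into $S_0,S_1,\ldots,S_l$ with mutually orthogonal spans. Across blocks $g_{jk}=0$.

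First I would show that at a minimizer $S_0=\emptyset$ and the dimension count $|S_0|+\sum_j(|S_j|-1)\le d$ is tight. If there is slack, or $S_0\neq\emptyset$, I would exhibit a code with strictly smaller loss: merge an $S_0$ vector with some block, or re-embed a block as a larger simplex. This lowers some $g_{jk}$ from $0$ to a negative value and raises none. With tightness and $S_0=\emptyset$, $\sum_j s_j=n$ and $\sum_j(s_j-1)=d$ force $l=n-d$ exactly.

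Next, inside a block of size $s$ spanning dimension $s-1$ with nonpositive inner products, there is a positive linear dependence $\sum_i c_ix_i=0$. I would aim for the bound $\sum_{i\ne k}g_{ik}\ge -1$ for each row $k$, with equality iff the block is a regular simplex with $g=-1/(s-1)$.

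Then Jensen applied to the convex map $g\mapsto e^{g/\tau}$ lower-bounds the inner sum in each row by $(s-1)e^{-1/(\tau(s-1))}$. Since $\log$ is increasing, this bounds each row's term. Hence the minimizers over $\operatorname{SMC}(d,n)$ are orthogonal sums of regular simplices of sizes $s_1,\ldots,s_l\ge2$ with $\sum_j s_j=n$. The loss is then $\frac1n\sum_j s_j\varphi_\tau(s_j)$, where
\[
\varphi_\tau(s)=\log\Big(1+e^{-1/\tau}\big[(s-1)e^{-1/(\tau(s-1))}+n-s\big]\Big).
\]

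\textbf{Small $\tau$.} Expand $\log(1+x)=x+O(x^2)$. The leading term of $\sum_j s_j\varphi_\tau(s_j)$ is
\[
e^{-1/\tau}\sum_j s_j(n-s_j)=e^{-1/\tau}\Big(n^2-\sum_j s_j^2\Big).
\]
All other terms are $O\big(e^{-1/\tau-1/(\tau(n-1))}+e^{-2/\tau}\big)$, which is exponentially smaller. Maximizing the convex function $\sum_j s_j^2$ over integer partitions with $l=n-d$ parts, each at least $2$, gives the unique optimum with one part equal to $n-2(l-1)$ and all others equal to $2$. This is exactly the low-entropy code. Since there are finitely many partitions, there is a threshold $\tau_-(n)$ below which this strict ordering holds.

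\textbf{Large $\tau$.} Expand in $1/\tau$. The first-order term is proportional to $\sum_{k}\sum_{j\ne k}g_{jk}=-\sum_j s_j\cdot 1=-n$, which is the same for every partition. The second-order term involves $\sum g_{jk}^2=\sum_j s_j/(s_j-1)$, together with the variance correction from the $\log$. For each row the row sums $\sum_{j\ne k}g_{jk}=-1$ are identical, so the correction term should reduce to a constant plus a multiple of the same quantity. I expect the net effect to be that the loss increases with $\sum_j s_j/(s_j-1)$ at order $1/\tau^2$. Since $s\mapsto s/(s-1)$ is strictly convex, under $\sum_j s_j=n$ this is minimized exactly by balanced sizes with all $s_j\in\{p,p+1\}$. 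That is the high-entropy code, and again finiteness yields a threshold $\tau_+(n)$.

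Finally I would set $\tau_-<\tau_+$, shrinking $\tau_-$ or enlarging $\tau_+$ if needed.

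\textbf{Main obstacle.} I expect the hardest step to be the within-block reduction, namely the row-wise bound $\sum_{i\ne k}g_{ik}\ge-1$ for a non-regular block, together with the elimination of $S_0$ and of dimension slack. My first attempt would be to project $x_k$ onto the span of the remaining vectors of its block and use the positive dependence. A secondary difficulty is checking the sign of the second-order coefficient in the large-$\tau$ expansion.
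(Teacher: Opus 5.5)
\textbf{Gap: the row-wise bound in your within-block reduction is false.} The inequality $\sum_{i\neq k}g_{ik}\ge-1$ fails for non-regular blocks, even inside $\operatorname{SMC}(d,n)$.

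In $\mathbb{R}^3$ take $x_1=e_1$, $x_2=(-e_1+e_2)/\sqrt{2}$, $x_3=(-e_1-e_2)/\sqrt{2}$, $x_4=e_3$, $x_5=-e_3$. All inner products are nonpositive, so this is a softmax code with $(d,n)=(3,5)$. The block $\{x_1,x_2,x_3\}$ spans two dimensions and has the positive dependence $\sqrt{2}\,x_1+x_2+x_3=0$. Yet row $1$ has $\sum_{i\neq 1}g_{i1}=-\sqrt{2}<-1$. Its term $\log\bigl(1+e^{-1/\tau}(2+2e^{-1/(\sqrt{2}\tau)})\bigr)$ is strictly smaller than the row term $\log\bigl(1+e^{-1/\tau}(2+2e^{-1/(2\tau)})\bigr)$ of the regular triangle. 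So no row-by-row comparison, by projection or otherwise, can prove the simplex optimal.

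Only the average row sum is controlled. With $s:=\sum_i x_i$ and $r_k:=\langle s,x_k\rangle-1$, an $m$-point block satisfies $\frac1m\sum_k r_k=\|s\|^2/m-1\ge-1$. The missing idea is the second Jensen step in Lemma~\ref{lem.simplex batches}. After your row-wise bound $\sum_{j\neq k}e^{g_{jk}/\tau}\ge(m-1)\exp\bigl(r_k/(\tau(m-1))\bigr)$, use that $t\mapsto\log(c+ae^{bt})$ (with $a,b,c>0$) is convex and increasing. This replaces the individual $r_k$ by their average $\frac1m\sum_k r_k$, and only then do you invoke the bound $\ge -1$. Equality forces $s=0$ and constant rows, i.e., a regular simplex. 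The lemma needs no sign or dependence hypothesis, and that is also what makes the merging steps work.

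\textbf{The merging justification also fails as stated.} Grant, as you assert, that each block $S_j$ ($j\ge1$) has a dependence $\sum_{i\in S_j}c_ix_i=0$ with all $c_i>0$. Then any modified code whose Gram matrix is entrywise no larger off the diagonal satisfies $\|\sum_{i\in S_j}c_ix_i'\|^2\le 0$. This forces every inner product involving $S_j$ to stay unchanged. So the only entries that can be lowered without raising others lie among vectors of $S_0$. In particular, a singleton $S_0$ or dimensional slack cannot be removed by a modification that ``lowers some $g_{jk}$ and raises none.'' The merged regular simplex necessarily raises within-block inner products (e.g., from $-1/(s-1)$ to $-1/s$). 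The strict improvement must again come from the aggregate lemma applied to the merged index set: $S_0\cup S_1$ for a singleton $S_0$, or two blocks joined through a free direction when $W$ does not span.

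\textbf{What survives.} Once the reduction is repaired this way, your asymptotic comparison of the finitely many partitions is a legitimate alternative to the paper's route (strict concavity/convexity of $f_{n,\tau}$ plus exchange arguments). Your small-$\tau$ leading term $e^{-1/\tau}(n^2-\sum_j s_j^2)$ is correct. At large $\tau$ one finds $\varphi_\tau(s)=\log C(\tau)+\frac{1}{2n\tau^2(s-1)}+O(\tau^{-3})$, with $C(\tau)=1+e^{-1/\tau}(n-1-1/\tau)$ independent of $s$. This confirms your guess that at order $\tau^{-2}$ the loss increases with $\sum_j\frac{s_j}{s_j-1}$. Your thresholds depend on $(d,n)$, but only finitely many $d$ are admissible for each $n$, so taking extremes gives $\tau_\pm(n)$.
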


By Theorem~\ref{thm.softmax codes in orthoplex regime}, we may view Proposition~\ref{prop.orthoplex structure theorem} as a decomposition of our feasibility region $\operatorname{SMC}(d,n)$.
Using the notation of Proposition~\ref{prop.orthoplex structure theorem}, we make use of the partition $S_0\sqcup S_1\sqcup\cdots\sqcup S_l=[n]$ so that the batches $\{w_k\}_{k\in S_i}$ for $i\in\{0,1,\ldots,l\}$ are mutually orthogonal.
Then the quantity to minimize is
\begin{align*}
\mathcal{L}_{\operatorname{CE}}^{(\tau)}(W,W)
&=\frac{1}{n}\sum_{k=1}^{n} -\log \bigg( \frac{\exp(\langle w_k, w_k\rangle/\tau)  }{\sum_{j} \exp(\langle w_j, w_k\rangle/\tau)  } \bigg)\\
&=\frac{1}{n}\sum_{i=0}^{l}\sum_{k\in S_i} \log \bigg( \sum_{j=1}^n \exp(\langle w_j, w_k\rangle/\tau)   \bigg)-\frac{1}{\tau}\\
&=\frac{1}{n}\sum_{i=0}^{l}\sum_{k\in S_i} \log \bigg( n-|S_i|+e^{1/\tau}+\sum_{\substack{j\in S_i\\j\neq k}}\exp(\langle w_j, w_k\rangle/\tau) \bigg)-\frac{1}{\tau}.
\end{align*}
In particular, by decomposing into mutually orthogonal batches, our objective splits batch-wise into terms of the form
\[
\mathcal{L}_{\tau,c}(X)
:=\sum_k\log\bigg(c+\sum_{j\neq k}\exp(\langle x_j, x_k\rangle/\tau)\bigg),
\]
where $c>0$ is given by $n-|S_i|+e^{1/\tau}$ when applied to the $i$th batch.
Taking inspiration from~\cite{lu2022neural}, we show that $\mathcal{L}_{\tau,c}(X)$ is minimized by the vertices of an origin-centered regular simplex.

\begin{lemma}
\label{lem.simplex batches}
Fix $d$ and $n$ such that $2\leq n \leq d+1$, along with $\tau,c>0$.
The $n$-point configurations $X=\{x_i\}_{i\in[n]}$ in $S^{d-1}$ that minimize $\mathcal{L}_{\tau,c}(X)$ satisfy
\[
\sum_{i=1}^n x_i=0
\qquad
\text{and}
\qquad
\langle x_i,x_j\rangle=-\frac{1}{n-1}
\quad \forall\, i,j\in[n],\,i\neq j.
\]
\end{lemma}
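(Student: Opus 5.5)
We will prove the lemma by two applications of Jensen's inequality, in the spirit of~\cite{lu2022neural}. The outer one uses concavity of $\log$, or really convexity of $t\mapsto\log(c+e^{t})$; the inner one uses convexity of $\exp$. The target is a lower bound on $\mathcal{L}_{\tau,c}(X)$ that depends only on $\sum_{j\neq k}\langle x_j,x_k\rangle$. That sum equals $\|\sum_i x_i\|^2-n\ge -n$, and this single inequality pins down both the centroid condition and the inner products.

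\textbf{Step 1 (inner Jensen).} For each $k$, convexity of $\exp$ gives
\[
\sum_{j\neq k}\exp(\langle x_j,x_k\rangle/\tau)\;\geq\;(n-1)\exp\Big(\tfrac{1}{(n-1)\tau}\textstyle\sum_{j\neq k}\langle x_j,x_k\rangle\Big).
\]
Equality holds if and only if $\langle x_j,x_k\rangle$ is constant in $j\neq k$. Writing $s_k:=\frac{1}{n-1}\sum_{j\neq k}\langle x_j,x_k\rangle$ and using that $\log$ is increasing, we get
\[
\mathcal{L}_{\tau,c}(X)\ge\sum_k g(s_k),\qquad g(s):=\log\big(c+(n-1)e^{s/\tau}\big).
\]

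\textbf{Step 2 (outer Jensen).} The function $g$ is increasing. It is also convex, since $s\mapsto\log(c+ae^{s/\tau})$ is a log-sum-exp of affine functions; one can also check $g''>0$ directly because $c>0$. Convexity gives $\sum_k g(s_k)\ge n\,g(\bar s)$, where
\[
\bar s=\frac1n\sum_k s_k=\frac{\|\sum_i x_i\|^2-n}{n(n-1)}\ge-\frac{1}{n-1}.
\]
Monotonicity then gives $\mathcal{L}_{\tau,c}(X)\ge n\,g(-\tfrac{1}{n-1})$.

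\textbf{Step 3 (attainment and equality).} The hypothesis $n\le d+1$ guarantees that an origin-centered regular simplex with $n$ vertices exists in $S^{d-1}$. Such a simplex has all $\langle x_i,x_j\rangle=-\frac1{n-1}$, so it attains the bound, and the minimum value is exactly $n\,g(-\frac1{n-1})$. Now take any minimizer. Since $g$ is strictly increasing, the monotonicity step forces $\|\sum_i x_i\|=0$. Strict monotonicity of $\log$ and strict convexity of $\exp$ force equality in Step 1 for every $k$, so each row of off-diagonal inner products is constant: $\langle x_j,x_k\rangle=s_k$ for all $j\neq k$. Symmetry $\langle x_j,x_k\rangle=\langle x_k,x_j\rangle$ then gives $s_k=s_j$ for all $j\neq k$, so all $s_k$ are equal. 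Since their average is $-\frac1{n-1}$, every inner product equals $-\frac{1}{n-1}$. This equality case does not even need strict convexity of $g$.

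The main obstacle is the bookkeeping of the equality conditions: one has to make sure that strictness comes from the right places. Strictness should come from the exponential in Step 1 and from the monotonicity of $g$ in Step 2, not from convexity of $g$. Beyond that, the argument is routine.
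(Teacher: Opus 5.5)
Your proof is correct and takes essentially the same route as the paper: an inner Jensen step for $\exp$ on each row, an outer Jensen step for the convex increasing map $t\mapsto\log(c+ae^{bt})$, and the bound $\|\sum_i x_i\|^2\geq 0$, with equality forcing $\sum_i x_i=0$ and constant rows. Two small differences are that you explicitly check the bound is attained by the regular simplex (using $n\le d+1$), which the paper leaves implicit, and that you pin down the common value by symmetry plus averaging instead of the paper's $0=\langle \sum_i x_i, x_k\rangle = 1+(n-1)c_k$, an equivalent finish.
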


\begin{proof}
Put $s:=\sum_ix_i$ and $\beta:=1/\tau$.
Then for each $k\in[n]$, Jensen's inequality gives
\[
\frac{1}{n-1}\sum_{j\neq k}\exp(\langle x_j, x_k\rangle/\tau)
\geq \exp\bigg(\frac{1}{n-1}\sum_{j\neq k}\langle x_j, x_k\rangle/\tau\bigg)
=\exp\bigg(\frac{\beta}{n-1}\big(\langle s,x_k\rangle-1\big)\bigg).
\]
By the monotonicity of log, it follows that
\[
\mathcal{L}_{\tau,c}(X)
\geq\sum_{k=1}^n\log\bigg(c+(n-1)\exp\bigg(\frac{\beta}{n-1}\big(\langle s,x_k\rangle-1\big)\bigg)\bigg)
=\sum_{k=1}^n\log\big(c+ae^{b\langle s,x_k\rangle}\big),
\]
where $a=(n-1)e^{-\beta/(n-1)}$ and $b=\beta/(n-1)$.
Since the function $t\mapsto \log(c+ae^{bt})$ is convex and monotone, another application of Jensen's inequality gives
\[
\frac{1}{n}\mathcal{L}_{\tau,c}(X)
\geq\frac{1}{n}\sum_{k=1}^n\log\big(c+ae^{b\langle s,x_k\rangle}\big)
\geq \log\big(c+ae^{b\frac{1}{n}\sum_{k=1}^n\langle s,x_k\rangle}\big)
=\log\big(c+ae^{b\|s\|^2/n}\big)
\geq\log(c+a).
\]
We conclude by characterizing when $X$ achieves equality in this lower bound.
Equality in the last step requires $s=0$, which in turn ensures equality in the penultimate inequality.
The remaining inequality is saturated precisely when every $k$ has a constant $c_k$ such that $\langle x_j,x_k\rangle=c_k$ for every $j\neq k$.
Considering $0=\langle s,x_k\rangle=1+(n-1)c_k$, we necessarily have $c_k=-1/(n-1)$.
\end{proof}

As such, for each minimizer of $\mathcal{L}_{\operatorname{CE}}^{(\tau)}(W,W)$ subject to $W\in\operatorname{SMC}(d,n)$, the batches $S_1,\ldots,S_l$ index the vertices of origin-centered regular simplices.
It turns out that the remaining batch $S_0$ is empty, and furthermore, $l=n-d$:

\begin{lemma}
\label{lem.empty S_0}
Fix $d$ and $n$ such that $d+2\leq n \leq 2d$, along with $\tau>0$.
Consider any minimizer of $\mathcal{L}_{\operatorname{CE}}^{(\tau)}(W,W)$ subject to $W\in\operatorname{SMC}(d,n)$, and consider any partition $S_0\sqcup S_1\sqcup\cdots\sqcup S_l=[n]$ as described in Proposition~\ref{prop.orthoplex structure theorem}.
Then $S_0$ is empty, $W$ spans $\mathbb{R}^d$, and $l=n-d$.
\end{lemma}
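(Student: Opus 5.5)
The plan is a contradiction argument by improvement. Suppose $W$ minimizes $\mathcal{L}_{\operatorname{CE}}^{(\tau)}(W,W)$ over $\operatorname{SMC}(d,n)$, and fix a partition $S_0\sqcup S_1\sqcup\cdots\sqcup S_l$ as in Proposition~\ref{prop.orthoplex structure theorem}. If $S_0\neq\emptyset$ or $W$ fails to span $\mathbb{R}^d$, I will build another softmax code with strictly smaller loss.

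First I record the dimension count. By (a)--(c) the spans of the batches are mutually orthogonal, so
\[
\dim\operatorname{span}W=|S_0|+\sum_{j=1}^l(|S_j|-1)=n-l\leq d.
\]
In particular $l\geq n-d\geq 2$. Once $S_0=\emptyset$ and $\operatorname{span}W=\mathbb{R}^d$ are established, this identity immediately gives $l=n-d$.

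Second, I compute the loss batch by batch using the splitting $\mathcal{L}_{\tau,c}$ displayed before Lemma~\ref{lem.simplex batches}. Set $\beta=1/\tau$. If a batch of size $m$ is an origin-centered regular simplex, each of its points contributes
\[
\log\Big(e^\beta+n-1-g(m-1)\Big),
\qquad
g(t):=t\big(1-e^{-\beta/t}\big),\quad g(0):=0.
\]
One checks that $g$ is strictly increasing on $t\geq0$. Writing $u=\beta/t$, we have $g=\beta(1-e^{-u})/u$, and $(1-e^{-u})/u$ is strictly decreasing in $u$. Hence a point's contribution strictly decreases as the size of its simplex batch grows. Consequently, replacing two orthogonal simplex batches of sizes $a$ and $b$ by a single regular simplex on $a+b$ points strictly lowers the loss of every point involved and leaves all other points unchanged. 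This replacement needs $a+b-1=(a-1)+(b-1)+1$ dimensions, i.e., exactly one extra dimension orthogonal to everything else.

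Third, the two cases.
\begin{itemize}
\item[(i)] If $W$ does not span $\mathbb{R}^d$, there is a spare unit direction orthogonal to all of $W$. I merge $S_1$ and $S_2$ (both simplices by Lemma~\ref{lem.simplex batches}) into one regular simplex inside $\operatorname{span}(S_1\cup S_2)$ plus that direction. All inner products outside the merged batch stay $0$, so the new configuration still has $\alpha=0$. By Theorem~\ref{thm.softmax codes in orthoplex regime} it lies in $\operatorname{SMC}(d,n)$, and its loss is strictly smaller, a contradiction.
\item[(ii)] If $S_0\neq\emptyset$, with $s=|S_0|$, the points of $S_0$ are $s$ linearly independent vectors in an $s$-dimensional space $V_0$. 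I replace $S_0\cup S_1$ by a regular simplex on $s+|S_1|$ points in $V_0\oplus\operatorname{span}S_1$, which has the required dimension $s+|S_1|-1$. Points of $S_1$ strictly improve by the monotonicity of $g$. For points of $S_0$, I lower-bound the old contribution of the batch $S_0$ by applying Lemma~\ref{lem.simplex batches} to this batch, with $m=s$ and $c=n-s+e^\beta$. This bounds the old $S_0$ term below by the simplex value for batch size $s$, which is in turn strictly worse than the value for batch size $s+|S_1|$. The edge case $s=1$ needs no lemma, since the single point is simply orthogonal to everything. Again the new configuration has $\alpha=0$, giving a contradiction.
\end{itemize}

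The main obstacle I expect is step (ii). Lemma~\ref{lem.simplex batches} is stated for $n\le d+1$ points on a sphere, and I must make sure it applies to $S_0$ viewed inside $V_0$ with $s$ points in dimension $s$. That is fine, since $s\le s+1$. I must also check carefully that the bound it gives is exactly the formula $\log(e^\beta+n-1-g(s-1))$ per point, so that it can be compared with the merged simplex. The monotonicity of $g$ is the key inequality and is elementary.
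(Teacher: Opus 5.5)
Your argument is correct, and its skeleton matches the paper's: you exhibit a strictly better configuration built from regular simplices, certify it is still a softmax code via $\alpha=0$ and Theorem~\ref{thm.softmax codes in orthoplex regime}, and finish with the count $\dim\operatorname{span}W=n-l$.

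The difference is in how strict improvement is certified. The paper relies on the equality case of Lemma~\ref{lem.simplex batches}: a batch that is not a regular simplex cannot minimize $\mathcal{L}_{\tau,c}$. It applies this to $S_0\cup S_1$ when $|S_0|=1$. When $|S_0|\geq 2$, it applies it to $S_0$ alone, which is linearly independent and so cannot have zero sum. You instead compute the per-point value $\log(e^\beta+n-1-g(m-1))$ in closed form and use the monotonicity of $g(t)=t(1-e^{-\beta/t})$. This gives a pointwise improvement, handles all sizes of $S_0$ uniformly by always merging with $S_1$, and uses Lemma~\ref{lem.simplex batches} only as a lower bound.

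Your non-spanning step is also the one that actually works. The paper's text applies Lemma~\ref{lem.simplex batches} to $S_0\cup S_1$ inside $\operatorname{span}X+(\operatorname{span}W)^\perp$. Once $S_0=\emptyset$, that set is just $S_1$, which is already a regular simplex, so enlarging its ambient space gives no strict improvement. Merging two batches $S_1,S_2$ through the spare direction, which requires $l\geq n-d\geq 2$, is what makes this step go through.
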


\begin{proof}
Suppose $S_0$ is nonempty.
If $S_0$ is a singleton set, then applying Lemma~\ref{lem.simplex batches} to the spherical configuration $X:=\{w_k\}_{k\in S_0\cup S_1}$ in $\operatorname{span}X$ gives that $W$ does not minimize $\mathcal{L}_{\operatorname{CE}}^{(\tau)}(W,W)$ subject to $W\in\operatorname{SMC}(d,n)$.
Otherwise, we may apply Lemma~\ref{lem.simplex batches} to the spherical configuration $X:=\{w_k\}_{k\in S_0}$ in $\operatorname{span}X$ to reach the same conclusion.

Next, suppose $W$ does not span $\mathbb{R}^d$.
Then applying Lemma~\ref{lem.simplex batches} to the spherical configuration $X:=\{w_k\}_{k\in S_0\cup S_1}$ in $\operatorname{span}X+(\operatorname{span}W)^\perp$ gives that $W$ does not minimize $\mathcal{L}_{\operatorname{CE}}^{(\tau)}(W,W)$ subject to $W\in\operatorname{SMC}(d,n)$.

Finally, for each $i\in[l]$, let $d_i:=|S_i|-1$ denote the dimension of the span of $\{w_k\}_{k\in S_i}$.
Then
\[
d=\sum_{i=1}^l d_i
\qquad
\text{and}
\qquad
n=\sum_{i=1}^l (d_i+1)
= d+l,
\]
i.e., $l=n-d$.
\end{proof}

Overall, every minimizer consists of the vertices of mutually orthogonal origin-centered regular simplices.
Next, we determine the best tuple of simplex sizes.
For each $i\in[l]$, let $d_i:=|S_i|-1$ denote the dimension of the span of $\{w_k\}_{k\in S_i}$.
Our task is to determine the positive integers $d_1,\ldots,d_l$, all of which sum to $d$ by Lemma~\ref{lem.empty S_0}, that minimize
\begin{align*}
\mathcal{L}_{\operatorname{CE}}^{(\tau)}(W,W)
&=\frac{1}{n}\sum_{i=0}^{l}\sum_{k\in S_i} \log \bigg( n-|S_i|+e^{1/\tau}+\sum_{\substack{j\in S_i\\j\neq k}}\exp(\langle w_j, w_k\rangle/\tau) \bigg)-\frac{1}{\tau}\\
&=\frac{1}{n}\sum_{i=1}^{l}(d_i+1)\log \big( n-(d_i+1)+e^{1/\tau}+d_ie^{-1/(\tau d_i)} \big)-\frac{1}{\tau}\\
&=\frac{1}{n}\sum_{i=1}^{l}f_{n,\tau}(d_i)-\frac{1}{\tau},
\end{align*}
where the first equality is given by our original simplification of $\mathcal{L}_{\operatorname{CE}}^{(\tau)}(W,W)$, the second equality applies Lemmas~\ref{lem.simplex batches} and~\ref{lem.empty S_0}, and the last equality comes from defining $f_{n,\tau}\colon[1,n-1]\to\mathbb{R}$ by 
\[
f_{n,\tau}(x)
=(x+1)\log\big(n-x-1+e^{1/\tau}+xe^{-1/(\tau x)}\big).
\]
See Figure~\ref{fig.opt_softmax_codes_vs_temp} for an illustration of our discrete optimization problem in the special case where $n=10$.
In general, it turns out that $f_{n,\tau}$ is concave (resp.\ convex) when $\tau$ is sufficiently small (resp.\ large), which in turn dictates which choice of $(d_1,\ldots,d_l)$ minimizes $\mathcal{L}_{\operatorname{CE}}^{(\tau)}(W,W)$.

\begin{figure}[t]
\centering

\begingroup

\begin{tikzpicture}[x=12cm, y=1.2cm, line cap=round]
  \def\taumin{0.36}
  \def\taumax{0.61}
  \def\L{1}

  \pgfmathdeclarefunction{xtau}{1}{%
    \pgfmathparse{((#1)-\taumin)/(\taumax-\taumin)*\L}%
  }

  \newcommand{\drawaxis}[2]{
    \draw[->] (0,#1) -- (\L+0.05,#1);
    \node[right] at (\L+0.06,#1) {$\tau$};
    #2
  }

  \newcommand{\tick}[3]{
    \draw ({xtau(#1)},#2) -- ++(0,0.14);
    \draw ({xtau(#1)},#2) -- ++(0,-0.14);
    \node[below] at ({xtau(#1)},#2-0.14) {\scriptsize #3};
  }

  \drawaxis{0}{
    \tick{0.3916}{0}{0.3916}
    \tick{0.5847}{0}{0.5847}

    \node[above] at ({xtau((\taumin+0.3916)/2)},0) {\footnotesize concave};
    \node[above] at ({xtau((0.5847+\taumax)/2)},0) {\footnotesize convex};
  }

  \drawaxis{-1}{
    \tick{0.4968}{-1}{0.4968}

    \node[above] at ({xtau((\taumin+0.4968)/2)},-1) {\footnotesize $(3,1,1,1)$};
    \node[above] at ({xtau((0.4968+\taumax)/2)},-1) {\footnotesize $(2,2,1,1)$};
  }

  \drawaxis{-2}{
    \tick{0.4713}{-2}{0.4713}
    \tick{0.4968}{-2}{0.4968}

    \node[above] at ({xtau((\taumin+0.4713)/2)},-2) {\footnotesize $(5,1,1)$};
    \node[above] at ({xtau((0.4713+0.4968)/2)},-2) {\footnotesize $(3,3,1)$};
    \node[above] at ({xtau((0.4968+\taumax)/2)},-2) {\footnotesize $(3,2,2)$};
  }

  \drawaxis{-3}{
    \tick{0.4588}{-3}{0.4588}

    \node[above] at ({xtau((\taumin+0.4588)/2)},-3) {\footnotesize $(7,1)$};
    \node[above] at ({xtau((0.4588+\taumax)/2)},-3) {\footnotesize $(4,4)$};
  }
\end{tikzpicture}

\endgroup

\caption{Fix $n=10$. 
The first plot reports when $f_{n,\tau}$ is concave or convex.
The next three plots report the best choice of $(d_1,\ldots,d_l)$ as a function of temperature $\tau$.
Here, we take $d$ to be $6$, $7$, and~$8$, respectively.
We omit the $d=5$ case since the corresponding softmax code is unique up to rotation.
\label{fig.opt_softmax_codes_vs_temp}}
\end{figure}

\begin{lemma}
\label{lem.concave-convex}
For every integer $n>1$, there exist thresholds 
\[
0
<\tau_-(n)
<\tau_+(n)
<\infty
\]
such that for every $\tau<\tau_-(n)$ (resp.\ $\tau>\tau_+(n)$), it holds that $f_{n,\tau}$ is strictly concave (resp.\ convex).
\end{lemma}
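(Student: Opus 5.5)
The plan is to set $\beta:=1/\tau$ and write $f_{n,\tau}(x)=(x+1)\log g_\beta(x)$, where
\[
g_\beta(x):=n-x-1+e^{\beta}+xe^{-\beta/x},\qquad x\in[1,n-1].
\]
I will study $f_{n,\tau}''$ in the two asymptotic regimes $\beta\to\infty$ and $\beta\to0$. In each regime I will show that, after removing terms that are affine in $x$, $f_{n,\tau}''$ equals a small positive factor times a function of fixed sign, plus a uniformly smaller error. Since $[1,n-1]$ is compact, each sign statement holds for all $\beta$ beyond (resp.\ below) some cutoff, which gives the two thresholds. If $n=2$ the domain is a single point and there is nothing to prove, so assume $n\ge3$.

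\emph{Small temperature ($\beta\to\infty$).} Factor out $e^\beta$:
\[
f_{n,\tau}(x)=(x+1)\beta+(x+1)\log\bigl(1+e^{-\beta}h_\beta(x)\bigr),\qquad h_\beta(x):=n-1-x+xe^{-\beta/x}.
\]
The term $(x+1)\beta$ is affine and drops out of the second derivative. Next I expand $\log(1+u)=u+O(u^2)$ with $u=e^{-\beta}h_\beta(x)$. This gives
\[
f_{n,\tau}''(x)=e^{-\beta}\bigl[(x+1)h_\beta(x)\bigr]''+O(e^{-2\beta}),
\]
uniformly on $[1,n-1]$. To justify this, I will check that $h_\beta$, $h_\beta'$ and $h_\beta''$ are bounded uniformly in $x$ and in large $\beta$, which holds because $x\ge1$ keeps $e^{-\beta/x}$ and its $x$-derivatives bounded by $\mathrm{poly}(\beta)e^{-\beta/(n-1)}$.

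Now $(x+1)h_\beta(x)=(x+1)(n-1-x)+x(x+1)e^{-\beta/x}$. The first piece has second derivative exactly $-2$. The second piece has second derivative $O(\beta^2e^{-\beta/(n-1)})\to0$ uniformly. Hence $f_{n,\tau}''\le e^{-\beta}(-2+o(1))<0$ for all large $\beta$, which defines $\tau_-(n)$.

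\emph{Large temperature ($\beta\to0$).} Taylor expand in $\beta$. The linear terms cancel, since $e^\beta$ contributes $+\beta$ and $xe^{-\beta/x}$ contributes $-\beta$. This leaves
\[
g_\beta(x)=n+\tfrac{\beta^2}{2}\bigl(1+\tfrac1x\bigr)+R(\beta,x),\qquad |\partial_x^kR|\le C\beta^3\ \ (k\le2,\ x\in[1,n-1]).
\]
Here the uniform derivative bounds come from the real-analyticity of $(\beta,x)\mapsto g_\beta(x)$ on a neighborhood of $\{0\}\times[1,n-1]$. Therefore
\[
f_{n,\tau}(x)=(x+1)\log n+\frac{\beta^2}{2n}\cdot\frac{(x+1)^2}{x}+O_{C^2}(\beta^3).
\]
Since $(x+1)^2/x=x+2+1/x$ has second derivative $2/x^3\ge 2/(n-1)^3$, we get $f_{n,\tau}''\ge \beta^2/(n(n-1)^3)-C'\beta^3>0$ for small $\beta$, which defines $\tau_+(n)$.

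\emph{Ordering and main obstacle.} Finally, $\tau_-(n)<\tau_+(n)$ can be arranged by shrinking $\tau_-$ if necessary. This is consistent because a function cannot be both strictly concave and convex (for $n\ge3$). I expect the main obstacle to be the bookkeeping that makes the error terms uniform in $x$ at the level of second derivatives. That is the reason to organize both regimes around explicit remainder estimates: bounded derivatives of $h_\beta$ in the first regime, and analyticity in $(\beta,x)$ in the second.
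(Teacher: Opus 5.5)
Your proposal is correct. In the low-temperature regime it is essentially the paper's argument. The paper writes $f_{n,\tau}''=Q/g$ with $Q=(x+1)g''+2g'-(x+1)g'^2/g$, discards the last term because it is nonpositive, and shows $(x+1)g''+2g'\le 2\bigl(-1+(1+\beta+\beta^2)e^{-\beta/n}\bigr)<0$. Since $g'=h_\beta'$ and $g''=h_\beta''$, this is exactly your leading term $[(x+1)h_\beta]''$. The sign observation spares the paper the $O(e^{-2\beta})$ bookkeeping you do for $\log(1+e^{-\beta}h_\beta)$.

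In the high-temperature regime your route genuinely differs. The paper first lower-bounds $Q$ by a function of $u=\beta/x$ alone, using $1+\frac1x\ge 1+\frac1n$ and $g(x)/(x+1)\ge \frac1n+\frac12e^{-u}$. It then Taylor-expands that one-variable function at $u=0$ to get leading term $u^2/n$, so uniformity in $x$ is automatic from $0<u\le\beta$. You instead expand $f_{n,\tau}$ itself in $\beta$ with a $C^2$-uniform remainder. This costs a two-variable remainder estimate; smoothness on $[0,\beta_0]\times[1,n-1]$ plus the integral form of Taylor's remainder suffices, so analyticity is not needed. In return you get the sharp asymptotics $f_{n,\tau}''(x)=\beta^2/(nx^3)+O(\beta^3)$, and it becomes transparent that convexity comes from the term $\frac{\beta^2}{2n}(x+1)^2/x$.

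Your handling of the degenerate case $n=2$ and of arranging $\tau_-(n)<\tau_+(n)$ is fine; the paper leaves both implicit.
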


The proof of Lemma~\ref{lem.concave-convex} is a technical calculus argument that we defer to the end of this section.
For now, we use this lemma to prove the main result of this section.

\begin{proof}[Proof of Theorem~\ref{thm.low high entropy}]
Suppose $W$ is a minimizer of $\mathcal{L}_{\operatorname{CE}}^{(\tau)}(W,W)$ subject to $W\in\operatorname{SMC}(d,n)$.
Then as discussed above, Lemmas~\ref{lem.simplex batches} and~\ref{lem.empty S_0} imply that $W$ consists of the vertices of $l:=n-d$ mutually orthogonal origin-centered regular simplices, with the $i$th simplex having dimension $d_i$ such that $\sum_{i=1}^l d_i=d$, and furthermore,
\[
\mathcal{L}_{\operatorname{CE}}^{(\tau)}(W,W)
=\frac{1}{n}\sum_{i=1}^{l}f_{n,\tau}(d_i)-\frac{1}{\tau}.
\]
As such, it suffices to determine the tuple $(d_1,\ldots,d_l)$ of positive integers that sum to $d$ and minimize the right-hand side above.
Consider the thresholds $\tau_-(n)$ and $\tau_+(n)$ reported in Lemma~\ref{lem.concave-convex}.

\medskip
\noindent
\textbf{Case I:} $\tau<\tau_-(n)$.
We claim that the optimal $d_1\geq\cdots\geq d_l$ has $d_2=1$, and therefore the corresponding softmax code is low-entropy with parameter $p=d_1+1$.
To prove this, we will show that $d_2>1$ implies $(d_1,\ldots,d_l)$ is suboptimal.
It suffices to verify that $(d'_1,\ldots,d'_l)$ defined by $d'_1:=d_1+1$, $d'_2:=d_2-1$, and $d'_i:=d_i$ for all $i>2$ is an improvement.
First,
\[
\sum_{i=1}^l f_{n,\tau}(d_i) - \sum_{i=1}^l f_{n,\tau}(d'_i)
=\big(f_{n,\tau}(d_2)-f_{n,\tau}(d_2-1)\big)-\big(f_{n,\tau}(d_1+1)-f_{n,\tau}(d_1)\big).
\]
By the mean value theorem, there exist $c_1$ and $c_2$ satisfying
\[
d_2-1
< c_2
< d_2
\leq d_1
< c_1
< d_1+1
\]
such that 
\[
f_{n,\tau}'(c_1)=f_{n,\tau}(d_1+1)-f_{n,\tau}(d_1),
\qquad
f_{n,\tau}'(c_2)=f_{n,\tau}(d_2)-f_{n,\tau}(d_2-1).
\]
It follows that
\[
\sum_{i=1}^l f_{n,\tau}(d_i) - \sum_{i=1}^l f_{n,\tau}(d'_i)
=f'_{n,\tau}(c_2) - f'_{n,\tau}(c_1)
>0,
\]
where the last step follows from the fact that $f_{n,\tau}$ is strictly concave by Lemma~\ref{lem.concave-convex}.

\medskip
\noindent
\textbf{Case II:} $\tau>\tau_+(n)$.
We claim that the optimal $d_1\geq\cdots\geq d_l$ has $d_1-d_l\leq 1$, and therefore the corresponding softmax code is high-entropy with parameter $p=d_l+1$.
To prove this, we will show that $d_1-d_l>1$ implies $(d_1,\ldots,d_l)$ is suboptimal.
It suffices to verify that $(d'_1,\ldots,d'_l)$ defined by $d'_1:=d_1-1$, $d'_l:=d_l+1$, and $d'_i=d_i$ for all $1<i<l$ is an improvement.
First,
\[
\sum_{i=1}^l f_{n,\tau}(d_i) - \sum_{i=1}^l f_{n,\tau}(d'_i)
=\big(f_{n,\tau}(d_1)-f_{n,\tau}(d_1-1)\big)-\big(f_{n,\tau}(d_l+1)-f_{n,\tau}(d_l)\big).
\]
By the mean value theorem, there exist $c_1$ and $c_l$ satisfying
\[
d_l
<c_l
<d_l+1
\leq d_1-1
<c_1
<d_1
\]
such that
\[
f'_{n,\tau}(c_1)
=f_{n,\tau}(d_1)-f_{n,\tau}(d_1-1),
\qquad
f'_{n,\tau}(c_l)
=f_{n,\tau}(d_l+1)-f_{n,\tau}(d_l).
\]
It follows that
\[
\sum_{i=1}^l f_{n,\tau}(d_i) - \sum_{i=1}^l f_{n,\tau}(d'_i)
=f'_{n,\tau}(c_1) - f'_{n,\tau}(c_l)
>0,
\]
where the last step follows from the fact that $f_{n,\tau}$ is strictly convex by Lemma~\ref{lem.concave-convex}.
\end{proof}

\begin{proof}[Proof of Lemma~\ref{lem.concave-convex}]
For convenience, we put
\[
g(x)
:=n-x-1+e^{\beta}+xe^{-\beta/x},
\]
where $\beta:=1/\tau$.
Notably, $g(x)>0$ for every $x\in[1,n-1]$.
We have
\[
f''_{n,\tau}(x)
=\frac{1}{g(x)}\bigg((x+1)g''(x)+2g'(x)-(x+1)\frac{g'(x)^2}{g(x)}\bigg).
\]
It is sometimes convenient to write $u:=1/(\tau x)=\beta/x$.
We have
\[
g'(x)
=-1+(1+u)e^{-u},
\qquad
g''(x)
=\frac{u^2e^{-u}}{x}.
\]
In what follows, we apply the following primitives without further explanation:
\[
1\leq x\leq n-1\,,
\qquad
0\leq u\leq \beta\,.
\]

For the concavity claim, it suffices to show
\[
Q
:=(x+1)g''(x)+2g'(x)-(x+1)\frac{g'(x)^2}{g(x)}
<0.
\]
For the first two terms, we have
\begin{align*}
(x+1)g''(x)
&=(1+\tfrac{1}{x})u^2e^{-u}
\leq 2\beta^2e^{-\beta/n},\\[0.5em]
2g'(x)
&=2\big(-1+(1+u)e^{-u}\big)
\leq2\big(-1+(1+\beta)e^{-\beta/n}\big),
\end{align*}
and since
\[
g(x)
=n-x-1+e^{\beta}+xe^{-\beta/x}
\geq e^{\beta}+xe^{-\beta/x}
> 0,
\]
the third term satisfies
\[
-(x+1)\frac{g'(x)^2}{g(x)}
\leq0.
\]
Putting everything together,
\[
Q
\leq 2\big(-1+(1+\beta+\beta^2)e^{-\beta/n}\big),
\]
which is negative for all large $\beta$, i.e., it suffices to take $\tau<\tau_-(n)$ for some sufficiently small $\tau_-(n)$.

For the convexity claim, it suffices to show $Q>0$.
For the first two terms, we have
\[
(x+1)g''(x)
=(1+\tfrac{1}{x})u^2e^{-u}
\geq(1+\tfrac{1}{n})u^2e^{-u},
\qquad
2g'(x)
=2\big(-1+(1+u)e^{-u}\big).
\]
For the third term, we have
\[
g'(x)^2
=\big(-1+(1+u)e^{-u}\big)^2,
\qquad
\frac{g(x)}{x+1}
=\frac{n+e^\beta}{x+1}-1+\frac{x}{x+1}e^{-u}
\geq\frac{1}{n}+\frac{1}{2}e^{-u}.
\]
Putting everything together,
\begin{align*}
Q
&\geq(1+\tfrac{1}{n})u^2e^{-u}+2\big(-1+(1+u)e^{-u}\big)-\frac{\big(-1+(1+u)e^{-u}\big)^2}{\frac{1}{n}+\frac{1}{2}e^{-u}}\\
&=\tfrac{1}{n}u^2+(\tfrac{1}{3}+\tfrac{1}{n})u^3+\cdots,
\end{align*}
where the last step is the Taylor series expansion with respect to $u$, centered at $0$.
By an application of Taylor's theorem, it follows that $Q>0$ for all sufficiently small $u$.
Since $u=1/(\tau x)\leq 1/\tau$, it suffices to take $\tau>\tau_+(n)$ for some sufficiently large $\tau_+(n)$.
\end{proof}

\section{Discussion}
\label{sec.discussion}

In this paper, we characterized softmax codes in the orthoplex regime, we showed that they are always self dual, and we identified which of these are preferred by cross-entropy loss under extremely low and high temperatures.
Several questions remain.
First, are softmax codes always spherical codes?
This coincidence was known to hold when $d=2$ or $n\leq d+1$, and now we know it also holds when $d+2\leq n \leq 2d$.
There are only a few remaining examples of $(d,n)$ for which the corresponding spherical codes are known~\cite{EricsonZ:01}, so these are the natural next cases for further evaluation.
Is there a generalization of the theory of universal optimality~\cite{CohnK:07} that explains this coincidence?
More generally, is there any hope of determining softmax codes with $n>2d$?
Finally, it would be interesting to better understand the cross-entropy loss minimizers with positive temperatures $\tau>0$, and how the optimizers evolve with $\tau$; this is qualitatively similar to the problem investigated in~\cite{ChenGGKO:20}.

\section*{Acknowledgments}

This work was initiated at the 2024 AMS MRC workshop on ``Mathematics of Adversarial, Interpretable, and Explainable AI.''
DGM was supported by NSF DMS 2220304. 
All figures in this paper were made with the help of ChatGPT.

\end{document}